\def\eqref#1{equation~\ref{#1}}
\def\Algref#1{Algorithm~\ref{#1}}
\def\1{\bm{1}}
\def\vtheta{{\bm{\theta}}}
\def\vp{{\bm{p}}}
\def\vx{{\bm{x}}}
\DeclareMathAlphabet{\mathsfit}{\encodingdefault}{\sfdefault}{m}{sl}
\SetMathAlphabet{\mathsfit}{bold}{\encodingdefault}{\sfdefault}{bx}{n}
\def\gA{{\mathcal{A}}}
\def\gB{{\mathcal{B}}}
\def\gI{{\mathcal{I}}}
\def\gL{{\mathcal{L}}}
\def\gO{{\mathcal{O}}}
\def\gR{{\mathcal{R}}}
\def\gS{{\mathcal{S}}}
\def\sD{{\mathbb{D}}}
\def\sG{{\mathbb{G}}}
\def\sN{{\mathbb{N}}}
\def\sS{{\mathbb{S}}}
\def\sV{{\mathbb{V}}}
\newcommand{\E}{\mathbb{E}}
\DeclareMathOperator*{\argmin}{arg\,min}
\newtheorem{theorem}{Theorem}
\newtheorem{lemma}[theorem]{Lemma}
\title{Budgeted Online Model Selection \\and Fine-Tuning via Federated Learning}
\author{\name Pouya M. Ghari \email pmollaeb@uci.edu \\
      \addr Department of Electrical Engineering and Computer Science\\
      University of California Irvine
      \AND
      \name Yanning Shen\thanks{Corresponding author: Yanning Shen. Work in this paper was supported by NSF ECCS 2207457.} \email yannings@uci.edu \\
      \addr Department of Electrical Engineering and Computer Science\\
      University of California Irvine
      }
\begin{document}

\maketitle

\begin{abstract}
  Online model selection involves selecting a model from a set of candidate models `on the fly' to perform prediction on a stream of data. The choice of candidate models henceforth has a crucial impact on the performance. Although employing a larger set of candidate models naturally leads to more flexibility in model selection, this may be infeasible in cases where prediction tasks are performed on edge devices with limited memory. Faced with this challenge, the present paper proposes an online federated model selection framework where a group of learners (clients) interacts with a server with sufficient memory such that the server stores all candidate models. However, each client only chooses to store a subset of models that can be fit into its memory and performs its own prediction task using one of the stored models. Furthermore, employing the proposed algorithm, clients and the server collaborate to fine-tune models to adapt them to a non-stationary environment. Theoretical analysis proves that the proposed algorithm enjoys sub-linear regret with respect to the best model in hindsight. Experiments on real datasets demonstrate the effectiveness of the proposed algorithm.
\end{abstract}

\section{Introduction}
The performance of prediction tasks can be heavily influenced by the choice of model. As a result, the problem of model selection arises in various applications and studies, such as reinforcement learning (see, e.g., \citep{Dai2020,Farahmand2011,Lee2021}) where a learner selects a model from a set of candidate models to be deployed for prediction. {Furthermore, in practical scenarios, data often arrives in a sequential manner, rendering the storage and processing of data in batches impractical. Consequently, there is a need to conduct model selection in an online fashion.
In this regard, the learner chooses one model among a \emph{dictionary of models} at each learning round, and after performing a prediction with the chosen model, the learner incurs a loss. The best model in hindsight refers to the model with minimum cumulative loss over all learning rounds while regret is defined as the difference between the loss of the chosen model and the best model in hindsight. Performing model selection online, the goal is to minimize cumulative regret. Depending on the available information about the loss, different online model selection algorithms have been proposed in the literature (see, e.g.,, \citep{Foster2017,Muthukumar2019,Foster2019,Cella2020,Pacchiano2020,Karimi2021}) which are proven to achieve sub-linear regret. The performance of existing online model selection algorithms depends on the performance of models in the dictionary.
The choice of dictionary requires information about the performance of models on unseen data, which may not be available a priori. In this case, a richer dictionary of models may improve the performance. However, in practice, the learner may face storage limitations, making storing and operating with a large dictionary infeasible. For example, consider the learner as an edge device performing an online prediction task such as image or document classification using a set of pre-trained models. In this case, the learner may be unable to store all models in the memory. Faced with this challenge, the present work aims at answering the following critical question: \emph{How can learners perform online model selection with a large dictionary of models that requires memory beyond their storage capacity?}

To tackle this problem, the present paper proposes an \textbf{o}nline \textbf{f}ederated \textbf{m}odel \textbf{s}election and \textbf{f}ine-\textbf{t}uning algorithm called OFMS-FT to enable a group of learners to perform online prediction employing a large dictionary of models. To this end, the online model selection is performed in a federated manner in the sense that the group of learners, also known as \emph{clients}, interacts with a server to choose a model among all models stored in the server. Specifically, a server with a significantly larger storage capacity than clients stores a large number of models. At each round, each client chooses to receive and store a subset of models that can be fit into its memory and performs its prediction using one of the stored models. Each client computes the losses of received models and leverages these observed losses to select a model for prediction in future rounds. Moreover, the distribution of data streams observed by clients may differ from the distribution of data that models are pre-trained on. 
At each round, clients employ the proposed OFMS-FT to adapt models to their data and send the updates to the server. 
Upon receiving updates from clients, the server updates models. 

In addition to the storage capacity of clients, there are some other challenges that should be taken into account. Communication efficiency is a critical issue in federated learning (see e.g.,, \citep{konency2017,Karimireddy2020,Rothchild2020,Hamer2020}). The available bandwidth for client-to-server communication is usually limited which restricts the amount of information that can be sent to the server. To deal with limited communication bandwidth, using the proposed OFMS-FT, at each round the server chooses a subset of clients to fine-tune their received models. Moreover, in some cases, the distribution of data samples may be different across clients \citep{Smith2017,Zhang2021,Li2021}. Thus, different clients can have different models as the best model in hindsight. Through regret analysis, it is proven that by employing OFMS-FT, each client enjoys sub-linear regret with respect to its best model in hindsight. This shows that OFMS-FT effectively deals with heterogeneous data distributions among clients. In addition, the regret analysis of the present paper proves that the server achieves sub-linear regret with respect to the best model in hindsight. Furthermore, regret analysis shows that increase in communication bandwidth and memory of clients improves the regret bounds. This indicates efficient usage of resources by OFMS-FT. Experiments on regression and image classification datasets showcase the effectiveness of OFMS-FT compared with state-of-the-art alternatives.

\section{Problem Statement}
Consider a set of $N$ clients interacting with a server to perform sequential prediction. There are a set of $K$ models $f_1(\cdot;\cdot),\ldots,f_K(\cdot;\cdot)$ stored at the server. The set of models $f_1(\cdot;\cdot),\ldots,f_K(\cdot;\cdot)$ is called \emph{dictionary} of models. Due to clients' limited memory, clients are not able to store all models $f_1(\cdot;\cdot),\ldots,f_K(\cdot;\cdot)$ and a server with larger storage capacity stores models. 
Let $[K]$ denote the set $\{1,\ldots,K\}$. 
At each learning round $t$, client $i$ receives a data sample $\vx_{i,t}$ and makes a prediction for $\vx_{i,t}$. Specifically, at learning round $t$, client $i$ picks a model among the dictionary of models $f_1(\cdot;\cdot),\ldots,f_K(\cdot;\cdot)$ and makes prediction $f_{I_{i,t}}(\vx_{i,t};\vtheta_{I_{i,t},t})$ where $I_{i,t}$ denote the index of the chosen model by client $i$ at learning round $t$ and $\vtheta_{k,t}$ represents the parameter of model $k$ at learning round $t$. After making prediction, client $i$, $\forall i \in [N]$ incurs loss $\gL(f_{I_{i,t}}(\vx_{i,t};\vtheta_{I_{i,t},t}),y_{i,t})$ and observes label $y_{i,t}$ associated with $\vx_{i,t}$. This continues until the time horizon $T$. The loss function $\gL(\cdot,\cdot)$ is the same across all clients and depends on the task performed by clients. For example, if clients perform regression tasks, the loss function $\gL(\cdot,\cdot)$ can be the mean square error function while if clients perform classification tasks the loss function $\gL(\cdot,\cdot)$ can be chosen to be cross-entropy loss function. Furthermore, the present paper studies the adversarial setting where the label $y_{i,t}$ for any $i$ and $t$ is determined by the environment through a process unknown to clients. This means that the distribution of the data stream $\{(\vx_{i,t},y_{i,t})\}_{t=1}^{T}$ observed by client $i$ can be non-stationary (i.e., it can change over learning rounds) while client $i$, $\forall i \in [N]$ does not know the distribution from which $(\vx_{i,t},y_{i,t})$, $\forall t$ is sampled. Moreover, data distribution can be different across clients.

The performance of clients can be measured through the notion of regret which is the difference between the client's incurred loss and that of the best model in hindsight. Therefore, the regret of the $i$-th client can be formalized as 
\begin{align}
    \gR_{i,T} =  \sum_{t=1}^{T}{\E_t[\gL(f_{I_{i,t}}(\vx_{i,t};\vtheta_{I_{i,t},t}),y_{i,t})]} - \min_{k \in [K]}\sum_{t=1}^{T}{\gL(f_k(\vx_{i,t};\vtheta_{k,t}),y_{i,t})}, \label{eq:10}
\end{align}
where $\E_t[\cdot]$ denote the conditional expectation given observed losses in prior learning rounds.
The objective for each client is to minimize their regret by choosing a model from a dictionary in each learning round. This can be accomplished if clients can identify a subset of models that perform well with the data they have observed. Assessing the losses of multiple models, including the selected one, can help clients better evaluate model performance. This can expedite the identification of models with superior performance, ultimately reducing prediction loss. However, due to memory constraints, clients cannot compute the loss for all models. To address this limitation, clients must select a subset of models that can fit within their memory and calculate the loss for this chosen subset. This information aids in selecting a model for future learning rounds. Furthermore, to enhance model performance, clients and the server collaborate on fine-tuning models. Let $\vtheta_k^*$ be the optimal parameter for the $k$-th model which is defined as
\begin{align}
    \vtheta_k^* = \argmin_{\vtheta}{\sum_{i=1}^{N}{\sum_{t=1}^{T}{\gL(f_k(\vx_{i,t};\vtheta),y_{i,t})}}}. \label{eq:22r2}
\end{align} 
In this context, the regret of the server in fine-tuning model $k$ is defined as
\begin{align}
    \gS_{k,T} = \frac{1}{N}\sum_{i=1}^{N}{\sum_{t=1}^{T}{\gL(f_k(\vx_{i,t};\vtheta_{k,t}),y_{i,t})}} - \frac{1}{N} \sum_{i=1}^{N}{\sum_{t=1}^{T}{\gL(f_k(\vx_{i,t};\vtheta_k^*),y_{i,t})}}. \label{eq:10r2}
\end{align}
The objective of the server is to orchestrate model fine-tuning to minimize its regrets. This paper introduces an algorithmic framework designed to enable clients with limited memory to conduct online model selection and fine-tuning. Regret analysis in Section \ref{sec:regret} demonstrates that employing this algorithm leads to sub-linear regret for clients and the server. Additionally, the analysis in Section \ref{sec:regret} illustrates that increasing the memory budget for clients results in a tighter upper bound on regret for the proposed algorithm.  

\section{Online Federated Model Selection} \label{sec:EFL-FG}
The present section introduces a disciplined way to enable clients to pick a model among a dictionary of models beyond the storage capacity of clients while clients enjoy sub-linear regret with respect to the best model in the dictionary. The proposed algorithm enables clients to collaborate with each other to fine-tune models in order to adapt models to clients' data.

\subsection{Online Federated Model Selection} \label{sec:fms}
Let $c_k$ be the cost to store model $k$. For example, $c_k$ can be the amount of memory required to store the $k$-th model. Let $B_i$ denote the budget associated with the $i$-th client, which can be the memory available by the $i$-th client to store models. At each learning round, each client $i$ stores a subset of models that can be fit into the memory of client $i$. The $i$-th client fine-tunes and computes the loss of the stored subset of models. Computing the loss of multiple models at each learning round enables clients to obtain better evaluation on the performance of models which can lead to identifying the best model faster. Moreover, fine-tuning multiple models can result in adapting models to clients' data faster. In Section \ref{sec:regret}, it is demonstrated that an increase in the number of models that clients can evaluate and fine-tune leads to tighter regret bounds for the proposed algorithm

At learning round $t$, the $i$-th client assigns weight $z_{ik,t}$ to the $k$-th model, which indicates the credibility of the $k$-th model with respect to the $i$-th client. Upon receiving a new data sample, the $i$-th client updates the weights $\{z_{ik,t}\}_{k=1}^{K}$ based on the observed losses. The update rule for weights $\{z_{ik,t}\}_{k=1}^{K}$ will be specified later in \eqref{eq:5}.
Using $\{z_{ik,t}\}_{k=1}^{K}$, at learning round $t$, the $i$-th client selects one of the models according to the probability mass function (PMF) $\vp_{i,t}$ as 
\begin{align}
    p_{ik,t} = \frac{z_{ik,t}}{Z_{i,t}}, \forall k \in [K], \label{eq:13r2}
\end{align}
where $Z_{i,t} = \sum_{k=1}^{K}{z_{ik,t}}$. Let $I_{i,t}$ denote the index of the selected model by the $i$-th client at learning round $t$. Then client $i$ splits all models except for $I_{i,t}$-th model into clusters $\sD_{i1,t}, \ldots, \sD_{im_{iI_{i,t}},t}$ such that the cumulative cost of models in each cluster $\sD_{ij,t}$ does not exceed $B_i - c_{I_{i,t}}$. Note that $m_{ij,t}$ denote the number of clusters constructed by client $i$ at learning round $t$ if $I_{i,t}=j$. As it will be clarified in Section \ref{sec:regret}, packing models into minimum number of clusters helps clients to achieve tighter regret bound. Packing models into the minimum number of clusters can be viewed as  a bin packing problem (see e.g.,, \citep{Garey1972}) which is NP-hard \citep{Christensen2017}. Several approximation methods have been proposed and analyzed in the literature \citep{Johnson1973, Vega1981, Dosa2007, Hoberg2017}. In the present work, models are packed into clusters using the first-fit-decreasing (FFD) bin packing algorithm (see \Algref{alg:1} in Appendix \ref{D}). It has been proved that FFD can pack models into at most $\frac{11}{9}m^*+\frac{2}{3}$ clusters where $m^*$ is the minimum number of clusters (i.e., optimal solution for the clustering problem) \citep{Dosa2007}. 

After packing models into clusters, the $i$-th client chooses one of the clusters $\sD_{i1,t}, \ldots, \sD_{im_{iI_{i,t},t},t}$ uniformly at random. Let $J_{i,t}$ be the index of the cluster chosen by client $i$ at learning round $t$. The $i$-th client downloads and stores all models in the cluster $\sD_{iJ_{i,t},t}$ along with the $I_{i,t}$-th model. Since the cumulative cost of models in $\sD_{iJ_{i,t},t}$ does not exceed $B_i - c_{I_{i,t}}$, the cumulative cost of models in $\sD_{iJ_{i,t},t}$ in addition to the $I_{i,t}$-th model does not exceed the budget $B_i$. Let $\sS_{i,t}$ represents the subset of models stored by client $i$ at learning round $t$. Upon receiving a new datum at learning round $t$, the $i$-th client performs its prediction task using the $I_{i,t}$-th model. 
Then, the $i$-th client incurs loss $\gL(f_{I_{i,t}}(\vx_{i,t};\vtheta_{I_{i,t},t}),y_{i,t})$. After observing $y_{i,t}$, the $i$-th client computes the loss $\gL(f_k(\vx_{i,t};\vtheta_{k,t}),y_{i,t})$, $\forall k \in \sS_{i,t}$. After computing the losses, the $i$-th client obtains the importance sampling loss estimate for the $k$-th model as
\begin{align}
    \ell_{ik,t} = \frac{\gL(f_k(\vx_{i,t};\vtheta_{k,t}),y_{i,t})}{q_{ik,t}}\gI(k \in \sS_{i,t}), \label{eq:2}
\end{align}
where $\gI(\cdot)$ denote the indicator function and $q_{ik,t}$ is the probability that the $i$-th client stores the $k$-th model at round $t$. In order to derive $q_{ik,t}$, note that the probability of storing model $k$ at learning round $t$ can be conditioned on $I_{i,t}$ and based on the total probability theorem, it can be obtained that
\begin{align}
    q_{ik,t} = \sum_{j=1}^{K}{\Pr[I_{i,t}=j]\Pr[k \in \sS_{i,t}|I_{i,t}=j]} = \sum_{j=1}^{K}{p_{ij,t}\Pr[k \in \sS_{i,t}|I_{i,t}=j]}, \label{eq:1r3}
\end{align} 
where $\Pr[\gA|\gB]$ denote the probability of event $\gA$ given that event $\gB$ occurred. If client $i$ chooses the $k$-th model at learning round $t$ (i.e., $I_{i,t} = k$), then client $i$ stores model $k$ meaning that $\Pr[k \in \sS_{i,t}|I_{i,t}=k] = 1$. Let $m_{ij,t}$ denote the number of clusters when client $i$ chooses $I_{i,t} = j$. Note that if $I_{i,t} = j$ client $i$ splits all models except for model $j$ into $m_{ij,t}$ clusters. If client $i$ chooses $I_{i,t}=j$ such that $j \neq k$, then client $i$ stores model $k$ at learning round $t$ if the chosen cluster contains model $k$. Since only one cluster among $m_{ij,t}$ clusters contains model $k$ and client $i$ chooses one cluster uniformly at random, it can be concluded that $\Pr[k \in \sS_{i,t}|I_{i,t}=j, j \neq k] = \frac{1}{m_{ij,t}}$. Therefore, according to \eqref{eq:1r3}, the probability $q_{ik,t}$ can be obtained as
\begin{align}
    q_{ik,t} = p_{ik,t} + \sum_{\forall j:j \neq k}{\frac{p_{ij,t}}{m_{ij,t}}}. \label{eq:3}
\end{align}
At learning round $t$, client $i$ updates the weights of models using the multiplicative update rule as
\begin{align}
    z_{ik,t+1} = z_{ik,t}\exp\left(-\eta_i \ell_{ik,t}\right), \label{eq:5} 
\end{align}
where $\eta_i$ is the learning rate associated with the $i$-th client. According to \eqref{eq:2} and \eqref{eq:5}, if client $i$ does not observe the loss of model $k$ at learning round $t$, client $i$ keeps the weight $z_{ik,t+1}$ the same as $z_{ik,t}$. Conversely, when the loss is observed, a higher loss corresponds to a greater reduction in the weight $z_{ik,t}$.

\subsection{Online Model Fine-Tuning}
This subsection presents a principled way to enable clients to collaborate with the server to fine-tune models. If client $i$ participates in fine-tuning at learning round $t$, client $i$ locally updates all its stored models in $\sS_{i,t}$ and sends updated models' parameters to the server. Sending updated parameters of a model to the server occupies a portion of communication bandwidth. Therefore, the available communication bandwidth may not be enough such that all clients can send their updates every learning round. In this case, at learning round $t$, the server has to choose a subset of clients $\sG_t$ such that the available bandwidth is sufficient for all clients in $\sG_t$ to send their updates to the server.

Let $b_k$ be the bandwidth required to send the updated parameters of the $k$-th model. If $i \in \sG_t$, then the $i$-th client needs the bandwidth
$
    e_i = \sum_{k \in \sS_{i,t}}{b_k}
$
to send updated parameters of its stored models.
Let the available communication bandwidth be denoted by $E$. Given $e_i$, $\forall i \in [N]$, the server splits clients into $\alpha$ groups $\sN_1,\ldots,\sN_\alpha$ such that $\sum_{i \in \sN_j}{e_i} \le E, \forall j \in [\alpha]$. This means that all clients in each group $\sN_j$, $\forall j \in [\alpha]$ can send their updated model parameters to the server given the available bandwidth. At each learning round $t$, the server draws one of the client groups $\sN_1,\ldots,\sN_\alpha$ uniformly at random. Clients in the chosen group send their updated models' parameters to the server. In other words, $\sG_t := \sN_{\iota_t}$ where $\iota_t$ represents the index of the chosen client group at learning round $t$. Since the probability of choosing a client group is $\frac{1}{\alpha}$, the probability that a client is chosen by the server to send its updated models parameters is $\frac{1}{\alpha}$ as well. Therefore, to maximize the probability that a client is chosen by the server to be in $\sG_t$, the server can split clients into a minimum number of groups. To do this, bin-packing algorithms such as FFD \citep{Dosa2007} can be employed.

Define the importance sampling loss gradient estimate $\nabla \hat{\ell}_{ik,t}$ associated with client $i$ and model $k$ as
\begin{align}
    \nabla \hat{\ell}_{ik,t} = \frac{\alpha}{q_{ik,t}} &\nabla \gL(f_k(\vx_{i,t};\vtheta_{k,t}),y_{i,t}) \gI(i \in \sG_t, k \in \sS_{i,t}). \label{eq:2r2}
\end{align} 
Recall that $\gI(\cdot)$ denote the indicator function. The $i$-th client updates the models' parameters as
\begin{align}
    \vtheta_{ik,t+1} = \vtheta_{k,t} - \eta_f \nabla \hat{\ell}_{ik,t}, \forall k \in \sS_{i,t}, \label{eq:1r2}
\end{align}
where $\eta_f$ is the fine-tuning learning rate. According to \eqref{eq:2r2} and \eqref{eq:1r2}, if the $i$-th client is not chosen for fine-tuning (i.e., $i \notin \sG_t$), then $\vtheta_{ik,t+1} = \vtheta_{k,t}$. Thus, if $i \in \sG_t$, the $i$-th client sends locally updated models parameters $\{\vtheta_{ik,t+1}\}_{\forall k \in \sS_{i,t}}$ to the server. Let $\sV_{k,t}$ denote the index set of clients such that $i \in \sV_{k,t}$ if the server receives the update of the $k$-th model $\vtheta_{ik,t+1}$ from the $i$-th client. Upon aggregating information from clients, the server updates model parameters $\{\boldsymbol{\theta}_{k,t}\}_{k=1}^{K}$ as:
\begin{align}
    \vtheta_{k,t+1} = \vtheta_{k,t} - \frac{1}{N}\sum_{i \in \sV_{k,t}}{(\vtheta_{k,t} - \vtheta_{ik,t+1})} = \vtheta_{k,t} - \frac{\eta_f}{N}\sum_{i=1}^{N}{\nabla \hat{\ell}_{ik,t}}. \label{eq:3r2}
\end{align}
The proposed Online Federated Model Selection and Fine-Tuning (OFMS-FT) algorithm is summarized in \Algref{alg:6}. Steps \ref{step:5} to \ref{step:10} in \Algref{alg:6} outline how client $i$ selects a subset of models during each learning round $t$. Subsequently, each client $i$, $\forall i \in [N]$, transmits its required bandwidth $e_i$ (acquired in step \ref{step:10}) to the server. Upon receiving ${e_i}$, $\forall i \in [N]$, the server, following steps \ref{step:12} to \ref{step:14}, determines a subset of clients $\sG_t$ eligible to send their updates. Each client $i$ then utilizes its selected model for prediction (step \ref{step:16}) and computes the loss of its stored model subset, updating the weights ${z_{ik,t}}_{k=1}^{K}$ (step \ref{step:17}). If client $i$ is included in the server's selected subset $\sG_t$, it transmits its local updates to the server, as depicted in step \ref{step:19}. Finally, in step \ref{step:22}, the server updates the model parameters for use in the subsequent learning round.

\begin{algorithm}[tb]
	\caption{OFMS-FT: Online Federated Model Selection and Fine-Tuning}
	\label{alg:6}
	\begin{algorithmic}[1]
		\State \textbf{Input:} Models $f_k(\cdot;\vtheta_{k,0})$, costs $c_k$, $b_k$ and budgets $B_i$, $E$, $\forall i$, $\forall k$.
		\State \textbf{Initialize:} $z_{ik,1}=1$, $\forall k$, $\forall i$.
		\For{$t=1,\ldots,T$}
			\ForAll{$i \in [N]$, the $i$th client}
				\State Chooses a model using PMF $\vp_{i,t}$ in \eqref{eq:13r2}. $I_{i,t}$ denote the index of the chosen model. \label{step:5}
				\State Splits all models except for $I_{i,t}$-th model into clusters $\sD_{i1,t}, \ldots, \sD_{im_{i,t},t}$ such that
					\begin{align}
						\sum_{k \in \sD_{ij,t}}{c_k} \le B_i - c_{I_{i,t}}, \forall j: 1 \le j \le m_{i,t}. \nonumber
					\end{align} \label{step:6}
				\State Chooses one cluster among $\{\sD_{ij,t}\}_{j=1}^{m_{i,t}}$ uniformly at random where $J_{i,t}$ is the chosen cluster index. \label{step:7}
				\State Constructs the set of model indices $\sS_{i,t} = \{I_{i,t}\} \cup \{k| \forall k \in \sD_{iJ_{i,t},t}\}$.
				\State Downloads all models whose indices are in $\sS_{i,t}$ from the server.
				\State Sends the bandwidth cost $e_i = \sum_{k \in \sS_{i,t}}{b_k}$ to the server. \label{step:10}
			\EndFor
			\State The server splits clients into $\alpha$ groups $\sN_1,\ldots,\sN_\alpha$ such that $\sum_{i \in \sN_j}{e_i} \le E, \forall j \in [\alpha]$. \label{step:12}
			\State The server draws one of the groups $\{\sN_{j}\}_{j=1}^{\alpha}$ uniformly at random.
			\State The server finds $\sG_t:=\sN_{\iota_t}$ with $\iota_t$ as the chosen client group index. \label{step:14}
			\ForAll{$i \in [N]$, the $i$th client}
				\State Makes prediction $f_{I_{i,t}}(\vx_{i,t};\vtheta_{I_{i,t},t})$ and computes $\gL(f_k(\vx_{i,t};\vtheta_{k,t}),y_{i,t})$, $\forall k \in \sS_{i,t}$. \label{step:16}
				\State Updates $z_{ik,t}$, $\forall k \in \sS_{i,t}$ according to \eqref{eq:5}. \label{step:17}
				\If{$i \in \sG_t$}
					\State Sends $\vtheta_{ik,t+1}$, $\forall k \in \sS_{i,t}$ obtained by \eqref{eq:1r2} to the server. \label{step:19}
				\EndIf
			\EndFor
			\State The server updates models parameters as in \eqref{eq:3r2}. \label{step:22}
		\EndFor
	\end{algorithmic}
\end{algorithm}

\section{Regret Analysis} \label{sec:regret}
The present section analyzes the performance of OFMS-FT in terms of cumulative regret. To analyze the performance of OFMS-FT, it is supposed that the following assumptions hold:\\
\textbf{(a1)} For any $(\vx,y)$ and $\vtheta$, the loss is bounded as $0 \le \gL(f_k(\vx;\vtheta),y) \le 1$.\\
\textbf{(a2)} The budget satisfies $B_i \ge c_k + c_j$, $\forall k,j \in [K]$, $\forall i \in [N]$.\\
\textbf{(a3)} The loss function $\gL(f_k(\vx;\vtheta),y)$ is convex with respect to $\vtheta$, $\forall k \in [K]$.\\ 
\textbf{(a4)} The gradient of the loss function is bounded as $\|\nabla \gL(f_k(\vx;\vtheta),y)\| \le G$, $\forall \vtheta$, $\forall k \in [K]$. Also, $\vtheta$ belongs to a bounded set such that $\|\vtheta\|^2 \le R$.

Let $m_{ij}^*$ be the minimum number of clusters if client $i$ splits all models except for model $j$ such that the cumulative cost of each cluster does not exceed $B_i - c_j$. Define $\mu_i = \max_j m_{ij}^*$, which can be interpreted as the upper bound for the minimum number of clusters that can be constructed by client $i$ at each learning round. It can be concluded that $\mu_i < K$ and increase in budget $B_i$ leads to decrease in $\mu_i$. The following theorem obtains the upper bound for the regret of clients and the server in terms of $\mu_i$.

\begin{theorem} \label{th:4}
Assume that clients utilize the first fit decreasing algorithm in order to split models into clusters $\sD_{i1,t}, \ldots, \sD_{im_{i,t},t}$. Under (a1) and (a2), the expected cumulative regret of the $i$-th client using OFMS-FT is bounded by
\begin{align}
    \gR_{i,T} \le \frac{\ln K}{\eta_i} + \eta_i \mu_iT, \label{eq:11r2}
\end{align}
which holds for all $i \in [N]$. Under (a1)--(a4), the cumulative regret of the server in fine-tuning model $k$ using OFMS-FT is bounded by
\begin{align}
    \gS_{k,T} \le \frac{R}{2\eta_f} + \frac{1}{N}\sum_{i=1}^{N}{ \mu_i \alpha \eta_f G^2 T}. \label{eq:12r2}
\end{align}

\end{theorem}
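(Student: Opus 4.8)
The plan is to prove the two inequalities separately by standard templates: \eqref{eq:11r2} by an EXP3-type exponential-weights potential argument, and \eqref{eq:12r2} by the online (stochastic) gradient-descent regret analysis. The one ingredient both parts need is a lower bound on the storage probability $q_{ik,t}$, obtained from the FFD guarantee. When client $i$ selects model $j$ it forms $m_{ij,t}$ clusters with $m_{ij,t}\le \frac{11}{9}m_{ij}^*+\frac23\le 2\mu_i$ (the last step uses $\mu_i\ge 1$, which holds whenever $K\ge 2$), so from \eqref{eq:3}, $q_{ik,t}=p_{ik,t}+\sum_{j\neq k}p_{ij,t}/m_{ij,t}\ge \frac{1}{2\mu_i}\sum_{j=1}^{K}p_{ij,t}=\frac{1}{2\mu_i}$. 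Assumption (a2) is used only to guarantee feasibility of the clustering step (every non-selected model fits in the residual budget $B_i-c_{I_{i,t}}$), and (a1) enters through $\gL\le 1$.

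For the client bound I would condition on the server's fine-tuned parameter trajectory $\{\vtheta_{k,t}\}$, so that $\gL(f_k(\vx_{i,t};\vtheta_{k,t}),y_{i,t})$ is a fixed adversarial sequence and the only randomness left is client $i$'s model/cluster draws. With $Z_{i,t}=\sum_k z_{ik,t}$ and $Z_{i,1}=K$, using $e^{-x}\le 1-x+x^2/2$ for $x\ge 0$ (valid since $\eta_i\ell_{ik,t}\ge 0$) and $\ln(1+u)\le u$, I bound $\ln\frac{Z_{i,t+1}}{Z_{i,t}}=\ln\sum_k p_{ik,t}e^{-\eta_i\ell_{ik,t}}\le -\eta_i\sum_k p_{ik,t}\ell_{ik,t}+\frac{\eta_i^2}{2}\sum_k p_{ik,t}\ell_{ik,t}^2$; summing over $t$ and using $\ln\frac{Z_{i,T+1}}{Z_{i,1}}\ge -\eta_i\sum_t\ell_{ik,t}-\ln K$ for any fixed $k$ gives, after rearranging, $\sum_t\sum_{k'}p_{ik',t}\ell_{ik',t}-\sum_t\ell_{ik,t}\le \frac{\ln K}{\eta_i}+\frac{\eta_i}{2}\sum_t\sum_{k'}p_{ik',t}\ell_{ik',t}^2$. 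Taking expectations, the estimate \eqref{eq:2} is conditionally unbiased, $\E_t[\ell_{ik,t}]=\gL(f_k(\vx_{i,t};\vtheta_{k,t}),y_{i,t})$, because $\E_t[\gI(k\in\sS_{i,t})]=q_{ik,t}$; since $I_{i,t}\sim\vp_{i,t}$, the left side equals $\sum_t\E_t[\gL(f_{I_{i,t}}(\vx_{i,t};\vtheta_{I_{i,t},t}),y_{i,t})]-\sum_t\gL(f_k(\vx_{i,t};\vtheta_{k,t}),y_{i,t})$ in expectation, and the quadratic term obeys $\E_t[\sum_{k'}p_{ik',t}\ell_{ik',t}^2]=\sum_{k'}p_{ik',t}\gL(\cdot)^2/q_{ik',t}\le \sum_{k'}p_{ik',t}/q_{ik',t}\le 2\mu_i$ by (a1) and the $q$-bound. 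Choosing $k$ to be the best model in hindsight yields $\frac{\ln K}{\eta_i}+\frac{\eta_i}{2}\cdot 2\mu_i T$, i.e.\ \eqref{eq:11r2}.

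For the server bound, write $F_{k,t}(\vtheta)=\frac1N\sum_i\gL(f_k(\vx_{i,t};\vtheta),y_{i,t})$ so that $\gS_{k,T}=\sum_t\big(F_{k,t}(\vtheta_{k,t})-F_{k,t}(\vtheta_k^*)\big)$ with $\vtheta_k^*$ deterministic, and note that \eqref{eq:3r2} is the step $\vtheta_{k,t+1}=\vtheta_{k,t}-\eta_f g_{k,t}$ with $g_{k,t}=\frac1N\sum_i\nabla\hat\ell_{ik,t}$. Expanding $\|\vtheta_{k,t+1}-\vtheta_k^*\|^2$ gives $\langle g_{k,t},\vtheta_{k,t}-\vtheta_k^*\rangle=\frac{1}{2\eta_f}\big(\|\vtheta_{k,t}-\vtheta_k^*\|^2-\|\vtheta_{k,t+1}-\vtheta_k^*\|^2\big)+\frac{\eta_f}{2}\|g_{k,t}\|^2$, whose first term telescopes to at most $\frac{1}{2\eta_f}\|\vtheta_{k,1}-\vtheta_k^*\|^2\le\frac{R}{2\eta_f}$ under (a4) (no projection is required, since the gradient bound in (a4) holds for all $\vtheta$). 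Next, $g_{k,t}$ is conditionally unbiased for $\nabla F_{k,t}(\vtheta_{k,t})$: the server draws one of the $\alpha$ groups uniformly and independently of the stored subsets, so $\Pr[i\in\sG_t,k\in\sS_{i,t}]=\frac1\alpha q_{ik,t}$, cancelling the $\alpha/q_{ik,t}$ factor in \eqref{eq:2r2}; convexity (a3) then gives $F_{k,t}(\vtheta_{k,t})-F_{k,t}(\vtheta_k^*)\le \langle\nabla F_{k,t}(\vtheta_{k,t}),\vtheta_{k,t}-\vtheta_k^*\rangle=\E_t\langle g_{k,t},\vtheta_{k,t}-\vtheta_k^*\rangle$. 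Finally, by Jensen/Cauchy--Schwarz and $|\sV_{k,t}|\le N$, $\|g_{k,t}\|^2\le\frac1N\sum_i\|\nabla\hat\ell_{ik,t}\|^2$, while $\E_t\|\nabla\hat\ell_{ik,t}\|^2=\frac{\alpha}{q_{ik,t}}\|\nabla\gL(f_k(\vx_{i,t};\vtheta_{k,t}),y_{i,t})\|^2\le 2\mu_i\alpha G^2$ by (a4) and the $q$-bound, so $\E_t\|g_{k,t}\|^2\le\frac{2\alpha G^2}{N}\sum_i\mu_i$; multiplying by $\frac{\eta_f}{2}$, summing over $t$, and taking total expectation gives \eqref{eq:12r2}.

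The main obstacle I anticipate is the measurability/independence bookkeeping rather than the algebra: one must check carefully that, conditional on the history through round $t$, $\E_t[\gI(k\in\sS_{i,t})]=q_{ik,t}$ with the $m_{ij,t}$'s exactly as in \eqref{eq:3}, and that the server's uniform group draw is independent of the whole collection $\{\sS_{i,t}\}_i$ so that $\Pr[i\in\sG_t,k\in\sS_{i,t}]=q_{ik,t}/\alpha$; these unbiasedness identities are what make the importance weights telescope cleanly. The second delicate point is pinning down the FFD-induced constant relating $m_{ij,t}$ to $\mu_i$. Once $q_{ik,t}\ge\frac{1}{2\mu_i}$ is established, both inequalities follow from the standard EXP3 and OGD templates.
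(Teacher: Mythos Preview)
Your proposal is correct and follows essentially the same approach as the paper's proof: the EXP3 potential argument for the client bound (their Lemma~\ref{lem:1}), the OGD expansion plus convexity for the server bound, and in both cases the key inequality $q_{ik,t}\ge \frac{1}{2\mu_i}$ derived from the FFD guarantee $m_{ij,t}\le\frac{11}{9}\mu_i+\frac{2}{3}\le 2\mu_i$. Your derivation of the second-moment bounds $\E_t[\ell_{ik,t}^2]\le 1/q_{ik,t}$ and $\E_t\|\nabla\hat\ell_{ik,t}\|^2\le \alpha G^2/q_{ik,t}$, the use of $\|\sum_i\nabla\hat\ell_{ik,t}\|^2\le N\sum_i\|\nabla\hat\ell_{ik,t}\|^2$, and the telescoping with $\|\vtheta_{k,1}-\vtheta_k^*\|^2\le R$ all match the paper's steps exactly (the paper additionally invokes $\vtheta_{k,1}=\mathbf{0}$ to get $\|\vtheta_k^*\|^2\le R$, but this is the same conclusion).
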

\begin{proof}
see Appendix \ref{A}.
\end{proof}

If the $i$-th client sets
$
    \eta_i = \gO\left(\sqrt{\frac{\ln K}{\mu_iT}}\right), 
$
then the $i$-th client achieves sub-linear regret of
\begin{align}
    \gR_{i,T} \le \gO\left(\sqrt{(\ln K)\mu_i T}\right). \label{eq:17r2}
\end{align}
If the server sets the fine-tuning learning rate as
$
    \eta_f = \frac{1}{\sqrt{\frac{\alpha T}{N}\sum_{i=1}^{N}{\mu_i}}}, 
$
then the server achieves regret of
\begin{align}
    \gS_{k,T} \le \gO\left(\sqrt{\frac{\alpha T}{N}\sum_{i=1}^{N}{\mu_i}} \right). \label{eq:19r2}
\end{align}

The regret bounds in \eqref{eq:17r2} and \eqref{eq:19r2} show that a decrease in $\mu_i$ leads to tighter regret bound. If the $i$-th client has a larger budget $B_i$, the upper bound for the minimum number of model clusters $\mu_i$ decreases since client $i$ can split models into clusters with larger budgets. As an example, consider the special case that all models have the same cost $c_k = c$, $\forall k \in [K]$ while $B_i = \delta_i c$ where $\delta_i \ge 2$ is an integer. In this case, according to step \ref{step:6} in \Algref{alg:6}, the upper bound for the minimum number of clusters is $\mu_i = \frac{(K-1)c}{(\delta_i-1)c} = \frac{K-1}{\delta_i -1}$. Therefore, as the budget of a client increases, the client can achieve tighter regret bound. In addition, larger communication bandwidth enables the server to partition clients into smaller number of groups $\alpha$. Thus, according to \eqref{eq:19r2}, larger communication bandwidth leads to tighter regret bound for the server.

\textbf{Challenges of Obtaining Regret in \eqref{eq:19r2}.} Limited memory of clients brings challenges for obtaining the sub-linear regret in \eqref{eq:19r2} since clients cannot calculate the gradient loss of all models every learning round. To overcome this challenge, the present paper proposes update rules \eqref{eq:2r2} and \eqref{eq:1r2} along with the novel model subset selection presented in steps \ref{step:5}, \ref{step:6} and \ref{step:7} of \Algref{alg:6}. In what follows the effectiveness of the proposed update rules and model subset selection is explained. Employing vanilla online gradient descent update rule $\vtheta_{ik,t+1} = \vtheta_{k,t} - \nabla \gL(f_k(\vx_{i,t};\vtheta_{k,t}),y_{i,t})$ locally by clients, can result in regret of $\gO(\sqrt{T})$ if client $i$ knows $\nabla \gL(f_k(\vx_{i,t};\vtheta_{k,t}),y_{i,t})$, $\forall k \in [K]$ at every learning round (see e.g., \citep{Hazan2022}). This is not possible since client $i$ has limited memory and is not able to calculate the gradient loss of all models every learning round. To overcome this challenge, the present paper proposes the local update rules in \eqref{eq:2r2} and \eqref{eq:1r2} which use the gradient loss estimate $\nabla \hat{\ell}_{ik,t}$ instead of true loss gradient. Using \eqref{eq:32ap} of Appendix \ref{A}, it can be concluded that $\nabla \hat{\ell}_{ik,t}$ is an unbiased estimator of $\nabla \gL(f_k(\vx_{i,t};\vtheta_{k,t}),y_{i,t})$ meaning that $\E_t[\nabla \hat{\ell}_{ik,t}] = \nabla \gL(f_k(\vx_{i,t};\vtheta_{k,t}),y_{i,t})$. According to \eqref{eq:2r2}, obtaining $\nabla \hat{\ell}_{ik,t}$ does not require storing model $k$ and calculating $\nabla \gL(f_k(\vx_{i,t};\vtheta_{k,t}),y_{i,t})$ every learning round. Hence, $\nabla \hat{\ell}_{ik,t}$ can be obtained every learning round given the limited memory of client $i$. However, according to \eqref{eq:31ap}, \eqref{eq:33ap} and \eqref{eq:34ap} of Appendix \ref{A}, employing update rule of \eqref{eq:1r2} the regret of the server grows with $\E_t[\|\nabla \hat{\ell}_{ik,t}\|^2]$ which is upper bounded as $\E_t[\|\nabla \hat{\ell}_{ik,t}\|^2] \le \frac{\alpha G^2}{q_{ik,t}}$ (see \eqref{eq:32apb} in Appendix \ref{A}). Therefore, the regret of the server grows with $\frac{1}{q_{ik,t}}$ where $q_{ik,t}$ is the probability that client $i$ stores model $k$ and fine-tunes it at learning round $t$. Using model subset selection method presented in steps \ref{step:5}, \ref{step:6} and \ref{step:7} of \Algref{alg:6}, the probability $q_{ik,t}$ is obtained as in \eqref{eq:3}. In \eqref{eq:13ap} of Appendix \ref{A}, it is proven that if clients employ FFD algorithm to cluster models in step \ref{step:6}, then it is guaranteed that $q_{ik,t} \ge \frac{1}{2\mu_i}$. This lead to guaranteeing the regret upper bound in \eqref{eq:19r2}.

\section{Related Works and Discussions} \label{sec:rw}
This Section discusses differences, innovations, and improvements provided by the proposed OFMS-FT compared to related works in the literature.

\textbf{Online Model Selection.} Online model selection algorithms by \citet{Foster2017,Muthukumar2019,Foster2019,Cella2020,Pacchiano2020,Karimi2021} have studied either \emph{full-information} or \emph{bandit} settings. Full-information refers to cases where the loss of all models can be observed at every round while in bandit setting only the loss of the chosen model can be observed. Regret bounds obtained by full-information based online model selection algorithms cannot be guaranteed if the learner (i.e., a client) cannot store all models. Moreover, it is useful to mention that the present paper studies the adversarial setting where the losses observed by clients at each round are specified by the environment and may not follow any time-invariant distribution. It is well-known that in the adversarial bandit setting the learner achieves regret upper bound of $\gO(\sqrt{KT})$ (see e.g., \citep{Pacchiano2020}). The proposed OFMS-FT utilizes the available memory of clients to evaluate a subset of models every round (see step \ref{step:16} in \Algref{alg:6}) which helps client $i$ to achieve regret of $\gO(\sqrt{\mu_i T})$ as presented in \eqref{eq:17r2}. If client $i$ is able to store more than one model, then $\mu_i < K$ (see below \eqref{eq:10r2} and the discussion below Theorem \ref{th:4}) which shows that OFMS-FT utilizes the available memory of clients to improve their regret bound compared to bandit setting. Moreover, aforementioned online model selection works have not studied online fine-tuning of models. A model selection algorithm proposed by \citet{Pacchiano2022} assumes that each model (called base learner) comes with a candidate regret bound and utilizes this information for model selection. By contrast, the present paper assumes that there is no available prior information about the performance of models. Moreover, \citet{Muthukumar2022} has studied the problem of model selection in linear contextual bandits where the reward (can be interpreted as negative loss in our work) is the linear function of the context (can be interpreted as $\vx_{i,t}$ in our work). However, the present paper does not make this assumption in both theoretical analysis and experiments. Furthermore, online model selection when models are kernels has been studied in the literature (see e.g., \citep{Yang2012,Zhang2021kernel,Li2022,Ghari2023jmlr}) where the specific characteristics of kernel functions are exploited to perform model selection to alleviate computational complexity of kernel learning.

\textbf{Online Learning with Partial Observations.} Another line of research related to the focus of the present paper is online learning with expert advice where a learner interacts with a set of experts such that at each learning round the learner makes decision based on advice received from the experts \citep{Cesa-Bianchi2006}. The learner may observe the loss associated with a subset of experts after decision making, which can be modeled using a graph called feedback graph \citep{Mannor2011,Amin2015,Cohen2016,Alon2015,Cortes2020,Ghari2023tnnls}. In online federated model selection, each model can be viewed as an expert. Employing the proposed OFMS-FT, in addition to performing online model selection, clients and the server collaborate to fine-tune the models (experts). However, the aforementioned online learning algorithms do not study the case where the learner can influence experts. Performing online model selection and fine-tuning jointly in a federated fashion brings challenges for guarantying sub-linear regret that cannot be overcame using the existing online learning algorithms. Specifically, due to limited client-to-server communication bandwidth and limited memory of clients, all clients are not able to fine-tune all models every learning round. The proposed OFMS-FT introduces a novel model subset selection in steps \ref{step:5}, \ref{step:6} and \ref{step:7} of \Algref{alg:6} and a novel update rule in \eqref{eq:2r2} and \eqref{eq:1r2} to fine-tune models locally by clients in such a way that given limited memory of clients and limited communication bandwidth, the server achieves sub-linear regret of \eqref{eq:19r2}.

\textbf{Online Federated Learning.} The problem of online federated model selection and fine-tuning is related to online federated learning \citep{Chen2020,Mitra2021,Damaskinos2022}. \citet{Chen2020} has studied learning a global model when clients receive new data samples while they participate in federated learning. Online decision-making by clients has not been studied by \citet{Chen2020} and hence the regret bound for clients cannot be guaranteed. An online federated learning algorithm has been proposed by \citet{Damaskinos2022} to cope with the staleness in federated learning. However, it lacks theoretical analysis when clients need to perform online decision-making. An online mirror descent-based federated learning algorithm called Fed-OMD has been proposed in \citet{Mitra2021}. Fed-OMD obtains sub-linear regret when clients perform their online learning task while collaborating with the server to learn a single model. However, Fed-OMD cannot guarantee sub-linear regret when it comes to performing online model selection if clients are unable to store all models in the dictionary. Furthermore, \citet{Hong2022,Gogineni2022,Ghari2022} have studied the problem of online federated learning where each client learns a kernel-based model employing specific characteristics of kernel functions.

\textbf{Personalized Model Selection and Fine-Tuning.} In addition to online model selection, online learning and online federated learning discussed in Section \ref{sec:rw}, personalized federated learning can be related to the focus of this paper. Employing the proposed OFMS-FT, model selection and fine-tuning is personalized for clients. According to step \ref{step:5} in \Algref{alg:6}, each client chooses a model locally using the personalized PMF $\vp_{i,t}$ in \eqref{eq:13r2} to make a prediction at round $t$. This helps each client $i$, $\forall i \in [N]$ to achieve the sub-linear regret in \eqref{eq:17r2}. Furthermore, the choice of models to be fine-tuned locally by each client is personalized according to step \ref{step:19} in \Algref{alg:6}. Particularly, $q_{ik,t}$ in \eqref{eq:3} is the probability that the client $i$ fine-tunes the model $k$ at round $t$. The probability $q_{ik,t}$ is determined by client $i$ and it can be inferred that the probability to participate in fine-tuning a model is determined by client $i$ based on its preferences given the limited memory budget. It is useful to add that personalized federated learning is well-studied topic related to the focus of the present paper. In personalized federated learning framework, aggregating information from clients the server assists clients to learn their own personalized model. Several personalized federated learning approaches have been proposed in the literature for example inspired by model-agnostic meta-learning \citep{Finn2017,Fallah2020,Acar2021}, adding a regularization term to the objective function \citep{Hanzely2020,Dinh2020,Li2021c,Liu2022}, among others (see e.g., \citep{Deng2020,Collins2021,Marfoq2021,Shamsian2021}). However, none of the aforementioned works have studied online decision making, online federated model selection and fine-tuning when clients have limited memory and employing them, sub-linear regrets cannot be guaranteed for clients and the server.

\textbf{Client Selection.} Client selection in federated learning has been extensively explored in the literature \citep{Chen2018, Huang2021, Balakrishnan2022, Nemeth2022, Fu2023}. However, none of the aforementioned works have specifically studied client selection for online federated learning, where clients utilize the trained model from federated learning for online predictions. In the proposed OFMS-FT, the server selects clients for their participation in model fine-tuning uniformly at random. This choice aims to avoid differentiating among clients and fine-tune models in the favor of any clients. Nevertheless, an intriguing direction for future research is to investigate how alternative client selection strategies, beyond uniform selection, could enhance client regret in the context of online federated learning.

\section{Experiments} \label{experiments}
We tested the performance of the proposed OFMS-FT for online model selection through a set of experiments. The performance of OFMS-FT is compared with the following baselines: MAB \citep{Auer2003}, Non-Fed-OMS, RMS-FT, B-Fed-OMFT, FedOMD \citep{Mitra2021} and PerFedAvg \citep{Fallah2020}. MAB refers to the case where the server chooses a model using Exp3 algorithm \citep{Auer2003} and transmits the chosen model to all clients. Then, each client sends the loss of the received model to the server.
Non-Fed-OMS refers to \textbf{non}-\textbf{fed}erated \textbf{o}nline \textbf{m}odel \textbf{s}election where each client stores a fixed subset of models that can be fit into its memory. At each learning round, each client chooses one model from the stored subset of models using Exp3 algorithm. RMS-FT denote a baseline where at each learning round each client chooses a subset of models uniformly at random to fine-tune them. The prediction task is then carried out by selecting one of the chosen models uniformly at random. Furthermore, B-Fed-OMFT stands for Budgeted Federated Online Model Fine-Tuning. In this approach, the server maintains a set of models that can be fit into the memory of all clients. Clients collaborate with the server to fine-tune all models in each learning round. In the B-Fed-OMFT framework, each client employs the Exp3 algorithm to choose one model to perform the prediction task. 
Using Fed-OMD \citep{Mitra2021}, given a pre-trained model, clients and the server fine-tune the model.
PerFedAvg refers to the case where given a pre-trained model, clients and the server fine-tune the model using Personalized FedAvg \citep{Fallah2020}.
The performance of the proposed OFMS-FT and baselines are tested on online image classification and online regression tasks. Image classification tasks are performed over CIFAR-10 \citep{Krizhevsky2009} and MNIST \citep{Lecun1998} datasets. Online regression tasks are performed on Air \citep{Zhang2017} and WEC \citep{Neshat2018} datasets. 
For both image classification datasets, the server stores $20$ pre-trained convolutional neural networks (CNNs).
Based on the number of parameters required to store models, for CIFAR-10 the normalized costs of storing CNNs are either $0.89$ or $1$, and for MNIST the normalized costs are either $0.66$ or $1$.
The experiments were conducted with a single meta-replication, utilizing a consistent random seed for both the proposed OFMS-FT and all baseline methods.
Moreover, for both regression datasets, the server stores $20$ pre-trained fully-connected feedforward neural networks. Since all neural networks have the same size, the normalized costs of all of them are $1$. 
More details about models and datasets can be found in Appendix \ref{E}.

There are $50$ clients performing image classification task, and $100$ clients performing online regression task. Note that clients are performing the learning task in an online fashion such that at each learning round each client observes one data sample and predicts its label in real time. The learning rates $\eta_i$ for all methods are set to be $10/\sqrt{T}$ where $T=200$. Furthermore, the fine-tuning learning rate is set to $\eta_f = 10^{-3}/\sqrt{T}$.
Since the required bandwidth to send a model is proportional to the model size, the normalized bandwidth $b_k$ associated with the $k$-th model are considered to be the same as the normalized cost $c_k$.

\begin{table}[t]
\setlength{\tabcolsep}{4pt}
\caption{Average and standard deviation of clients' accuracy over CIFAR-10 and MNIST datasets. MSE ($\times 10^{-3}$) and its standard deviation ($\times 10^{-3}$) across clients over Air and WEC datasets.}
\label{table:2}
\begin{center}
\begin{tabular}{l||c|c|c|c}
\toprule
{\bf Algorithms}    &CIFAR-10 &MNIST &Air &WEC
\\ \midrule
MAB &$61.14\% \pm 10.12\%$    &$84.37\% \pm 5.18\%$ &$8.97 \pm 6.82$    &$36.96 \pm 8.52$ \\
Non-Fed-OMS  &$65.76\% \pm 12.24\%$    &$85.06\% \pm 7.30\%$ &$8.82 \pm 6.70$    &$64.84 \pm 40.30$ \\
RMS-FT  &$57.86\% \pm 9.75\%$    &$88.33\% \pm 4.25\%$ &$7.87 \pm 5.22$    &$18.89 \pm 4.02$ \\
B-Fed-OMFT  &$67.83\% \pm 12.28\%$    &$89.46\% \pm 4.71\%$ &$8.09 \pm 5.55$    &$31.94 \pm 7.97$ \\
Fed-OMD  &$64.34\% \pm 9.89\%$   &$88.69\% \pm 5.16\%$ &$11.37 \pm 7.16$   &$30.89 \pm 10.06$ \\
PerFedAvg   &$55.65\% \pm 11.94\%$   &$89.71\% \pm 4.93\%$ &$11.29 \pm 7.08$   &$30.09 \pm 10.17$ \\
\hline
OFMS-FT  &$\mathbf{76.77}\% \pm \mathbf{4.46}\%$   &$\mathbf{92.05}\% \pm \mathbf{2.69}\%$ &$\mathbf{7.46} \pm \mathbf{5.10}$   &$\mathbf{7.09} \pm \mathbf{1.67}$ \\
\bottomrule
\end{tabular}
\end{center}
\end{table}

\begin{figure} [t]
\centering
\subfigure[MNIST]{%
  \centering
  \includegraphics[scale=.35]{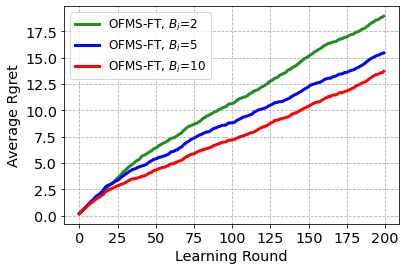}
    }
\quad
\subfigure[WEC]{%
\centering
  \includegraphics[scale=.35]{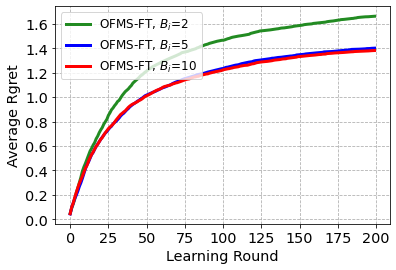}
    }
\caption{Average regret of clients using OFMS-FT with the change in budget $B_i$.}
\label{fig:1}
\end{figure}

Table \ref{table:2} demonstrates the average and standard deviation of clients' accuracy over CIFAR-10 and MNIST when the test set is distributed in non-i.i.d manner among clients. For CIFAR-10, each client receives $155$ testing data samples from one class and $5$ samples from each of the other nine classes. In the case of MNIST, each client receives at least $133$ samples from one class and at least $5$ samples from the other classes. The $200$ testing data samples are randomly shuffled and are sequentially presented to each client over $T=200$ learning rounds. The accuracy of the client $i$ is defined as
$
    \text{Accuracy}_i = \frac{1}{T}\sum_{t=1}^{T}{\gI(\hat{y}_{i,t} = y_{i,t})} 
$
where $\hat{y}_{i,t}$ denote the class label predicted by the algorithm. The memory budget is $B_i=5$, $\forall i \in [N]$. Each client is able to store up to $50$ images. In order to fine-tune models, clients employ the last $50$ observed images. 
Moreover, Fed-OMD and PerFedAvg fine-tune one of the $20$ pre-trained models such that both fine-tune the same pre-trained model. As can be observed from Table \ref{table:2}, the proposed OFMS-FT achieves higher accuracy than Non-Fed-OMS and B-Fed-OMFT. This corroborates that having access to larger number of models helps learners to achieve better learning performance, especially in cases where the learners are faced with heterogeneous data, and performance of models are unknown in priori. 
Moreover, from Table \ref{table:2} it can be seen that OFMS-FT achieves higher accuracy than MAB which admits the effectiveness of observing losses of multiple models at each learning round. Higher accuracy of OFMS-FT compared with Fed-OMD and PerFedAvg indicates the benefit of fine-tuning multiple models rather than one. The superior performance of OFMS-FT in comparison to RMS-FT highlights the efficacy of employing model selection through the PMF defined in equation \eqref{eq:13r2}, as opposed to choosing models uniformly at random. In addition, as can be seen from Table \ref{table:2}, the standard deviation of clients' accuracy associated with OFMS-FT is considerably lower than other baselines. This shows that using OFMS-FT, accuracy across clients shows less variations compared with other baselines. Therefore, the results confirm that OFMS-FT can cope with heterogeneous data of clients in more flexible and henceforth more effective fashion.

Furthermore, Table \ref{table:2} presents the mean square error (MSE) of online regression and its standard deviation across clients for Air and WEC datasets. Specifically, MSE of client $i$ is defined as
$
    \text{MSE}_i = \frac{1}{T}\sum_{t=1}^{T}{(\hat{y}_{i,t} - y_{i,t})^2}. 
$
In both the Air and WEC datasets, individual data samples are associated with one of four geographical areas. The distribution of data samples across clients is non-i.i.d, with 50 clients observing data samples from one specific site, while the remaining 50 clients observe data samples from another geographical site. At each round, only half of clients are able to send their updated models to the server. All other settings are the same as online image classification setting. Results for online regression tasks are consistent with the conclusions obtained from the results of online image classification task.
Moreover, Figure \ref{fig:1} illustrates the average regret of clients using the proposed OFMS-FT through learning rounds for different values of memory budget $B_i$. Figure \ref{fig:1} depicts the regret of OFMS-FT through learning on MNIST and WEC datasets. As can be seen, the increase in $B_i$ leads to obtaining lower regret by clients.
\begin{wraptable}{r}{.4\textwidth}
\vspace{-0.25in}
\caption{ MSE ($\times 10^{-3}$) and its standard deviation ($\times 10^{-3}$) of OFMS-FT across clients over WEC dataset under varying budgets among clients. }
\label{table:1}
\begin{center}
\begin{tabular}{c|c|c}
\toprule
$B_i = 3$    &$B_i=5$ &MSE
\\ \midrule
$60 \%$ &$40 \%$    &$7.96 \pm 1.49$ \\
$50 \%$ &$50 \%$    &$7.85 \pm 1.66$ \\
$40 \%$ &$60 \%$    &$7.32 \pm 1.54$ \\
\bottomrule
\end{tabular}
\end{center}
\vspace{-0.2in}
\end{wraptable}
Therefore, the results in Figure \ref{fig:1} are in agreement with the regret analysis in Section \ref{sec:regret}.
Table \ref{table:1} illustrates the sensitivity of the MSE and its standard deviation, as achieved by the proposed OFMS-FT, to the budget $B_i$ ($\forall i \in [N]$) over the WEC dataset. In this configuration, the budget varies across clients, with a subset having $B_i=3$ and the remainder $B_i=5$. The improvement in MSE becomes evident as the number of clients with a budget of $B=5$ increases. This observation indicates that an increase in budget enhances the performance of OFMS-FT, aligning with the theoretical findings presented in Section \ref{sec:regret}.

\section{Conclusion} \label{con}
Performing online model selection with a large number of models can improve the performance of online model selection especially when there is not prior information about models. The present paper developed a federated learning approach (OFMS-FT) for online model selection  when clients cannot store all models due to limitations in their memory. To adapt models to clients' data, employing OFMS-FT clients can collaborate to fine-tune models. It was proved that both clients and the server achieve sub-linear regret with respect to the best model in hindsight.
Experiments on regression and image classification datasets were carried out to showcase that the proposed OFMS-FT achieved better performance in comparison with non-federated online model selection approach and other state-of-the-art federated learning algorithms which employ a single model rather than a dictionary of models.

\bibliographystyle{tmlr}
\bibliography{References}

\appendix
\newpage
\section{First Fit Decreasing Algorithm} \label{D}
In this paper, first fit decreasing algorithm is employed to split models into clusters. To begin with,  order models by decreasing cost. Let $s(1),\ldots,s(K-1)$ denote the indices of all models except for the $I_{i,t}$-th model ordered in ascending manner according to models' costs such that if $i \le j$, then $c_{s(i)} \le c_{s(j)}$. At the $k$-th step of clustering, client $i$ checks whether the $s(k)$-th model can be fit into any currently existing clusters according to budget $B_i - c_{I_{i,t}}$. The $s(k)$-th model is put into the first cluster that it can be fit into. Otherwise, if it cannot be fit into any opened cluster, then it is assigned to a new cluster indexed by $m_{i,t}+1$. This continues until all models except for the $I_{i,t}$-th model are corresponded to a cluster. \Algref{alg:1} summarizes the clustering procedure performed by the $i$-th client.
\begin{algorithm}
	\caption{Cluster Generation by Client $i$ at Learning Round $t$}
	\label{alg:1}
	\begin{algorithmic}[1]
		\State \textbf{Input:} Chosen model index $I_{i,t}$, costs $c_k$, $\forall k \in [K]$ and budget $B_i$.
		\State \textbf{Initialize:} $m_{i,t}=1$.
		\State Order all models except for the $I_{i,t}$-th model by decreasing cost to obtain $s(1),\ldots,s(K-1)$.
		\For{$k=1,\ldots,K-1$}
			\State Set $j=1$ and $d=0$
			\While{$d=0$ and $j \le m_{i,t}$}
				\If{$\sum_{m \in \sD_{ij,t}}{c_m}+c_{s(k)} \le B_i - c_{I_{i,t}}$}
					\State Set $d=1$ and $j = j+1$
				\EndIf
			\EndWhile
			\If{$j=m_{i,t}+1$ and $d=0$}
				\State Assign the $s(k)$-th model to a new cluster $\sD_{i(m_{i,t}+1),t}$.
				\State Update $m_{i,t} = m_{i,t}+1$.
			\EndIf
		\EndFor
		\State \textbf{Output:} $\{\sD_{i1,t},\ldots,\sD_{im_{i,t},t}\}$
	\end{algorithmic}
\end{algorithm}

\section{Proof of Theorem \ref{th:4}} \label{A}
In order to prove Theorem \ref{th:4}, the following Lemma is used as the step-stone.
\begin{lemma} \label{lem:1}
Under (a1) and (a2), the regret of the $i$-th client with respect to any model $k$ is bounded from above as 
\begin{align}
    \sum_{t=1}^{T}{\E_t[\gL(f_{I_{i,t}}(\vx_{i,t};\vtheta_{I_{i,t},t}),y_{i,t})]} - \sum_{t=1}^{T}{\gL(f_k(\vx_{i,t};\vtheta_{k,t}),y_{i,t})} \le \frac{\ln K}{\eta_i} + \eta_i \mu_iT. \label{eq:44ap}
\end{align}
\end{lemma}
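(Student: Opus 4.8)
The plan is to run the classical exponential-weights (EXP3-type) potential argument on the per-client credibility weights $z_{ik,t}$, and then absorb the importance-sampling variance term using the first-fit-decreasing clustering guarantee. First I would record that the loss estimate is conditionally unbiased: since client $i$ stores model $k$ at round $t$ with probability exactly $q_{ik,t}$ — which is what the total-probability argument in \eqref{eq:1r3} and the resulting formula \eqref{eq:3} establish — the estimate \eqref{eq:2} satisfies $\E_t[\ell_{ik,t}]=\gL(f_k(\vx_{i,t};\vtheta_{k,t}),y_{i,t})$. Here $\E_t$ conditions on the history through round $t$, under which $p_{ik,t}$, $q_{ik,t}$, the cluster counts $m_{ij,t}$ and the parameters $\vtheta_{k,t}$ are all determined, so that the only fresh randomness is the draw of $I_{i,t}$ from \eqref{eq:13r2} and of the cluster index $J_{i,t}$.

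Next I would set $Z_{i,t}=\sum_k z_{ik,t}$ (so $Z_{i,1}=K$) and run the standard telescoping bound on $\ln(Z_{i,T+1}/Z_{i,1})$. Using \eqref{eq:5}, the inequality $e^{-x}\le 1-x+\tfrac{x^2}{2}$ for $x\ge 0$ (applicable because $\ell_{ik,t}\ge 0$), and $\ln(1+y)\le y$, one gets for each round
\[
  \ln\frac{Z_{i,t+1}}{Z_{i,t}}=\ln\Big(\sum_{k}p_{ik,t}e^{-\eta_i\ell_{ik,t}}\Big)\ \le\ -\eta_i\sum_{k}p_{ik,t}\ell_{ik,t}+\frac{\eta_i^{2}}{2}\sum_{k}p_{ik,t}\ell_{ik,t}^{2}.
\]
Summing over $t$ telescopes the left side; bounding $Z_{i,T+1}\ge z_{ik,T+1}=\exp(-\eta_i\sum_{t}\ell_{ik,t})$ for the fixed comparator $k$ and rearranging yields
\[
  \sum_{t=1}^{T}\sum_{k'}p_{ik',t}\ell_{ik',t}-\sum_{t=1}^{T}\ell_{ik,t}\ \le\ \frac{\ln K}{\eta_i}+\frac{\eta_i}{2}\sum_{t=1}^{T}\sum_{k'}p_{ik',t}\ell_{ik',t}^{2}.
\]

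Taking expectations and invoking the unbiasedness above together with the identity $\sum_{k'}p_{ik',t}\gL(f_{k'}(\vx_{i,t};\vtheta_{k',t}),y_{i,t})=\E_t[\gL(f_{I_{i,t}}(\vx_{i,t};\vtheta_{I_{i,t},t}),y_{i,t})]$ turns the left-hand side into the left-hand side of \eqref{eq:44ap}. For the variance term, assumption \textbf{(a1)} gives $\E_t[\ell_{ik',t}^{2}]=\gL(\cdot)^{2}/q_{ik',t}\le 1/q_{ik',t}$, so everything reduces to showing $\sum_{k'}p_{ik',t}/q_{ik',t}\le 2\mu_i$ for every $t$.

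That last bound is the crux, and the step I expect to be the main obstacle, since it is the only place where the bin-packing structure of the model-subset selection is used. Under \textbf{(a2)} the clustering in step~\ref{step:6} is well-defined, and the FFD approximation guarantee \citep{Dosa2007} gives $m_{ij,t}\le\frac{11}{9}m_{ij}^{*}+\frac{2}{3}\le\frac{11}{9}\mu_i+\frac{2}{3}\le 2\mu_i$ (the last step because $\mu_i\ge 1$). Substituting into \eqref{eq:3} and using $2\mu_i\ge 1$,
\[
  q_{ik',t}=p_{ik',t}+\sum_{j\ne k'}\frac{p_{ij,t}}{m_{ij,t}}\ \ge\ \frac{1}{2\mu_i}\Big(p_{ik',t}+\sum_{j\ne k'}p_{ij,t}\Big)=\frac{1}{2\mu_i},
\]
hence $p_{ik',t}/q_{ik',t}\le 2\mu_i\,p_{ik',t}$ and $\sum_{k'}p_{ik',t}/q_{ik',t}\le 2\mu_i$. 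Plugging this back gives a variance contribution of $\frac{\eta_i}{2}\cdot 2\mu_i\cdot T=\eta_i\mu_iT$, which together with the $\frac{\ln K}{\eta_i}$ term is exactly \eqref{eq:44ap}. Aside from this clustering estimate, the only remaining care is the measurability bookkeeping noted above — ensuring $p_{ik,t}$, $q_{ik,t}$ and $m_{ij,t}$ are treated as known given the past so that the conditional expectations factor — and the use of \textbf{(a2)} to guarantee the clustering is well-posed.
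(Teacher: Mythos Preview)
Your proposal is correct and follows essentially the same route as the paper: the exponential-weights potential argument on $Z_{i,t}$ with $e^{-x}\le 1-x+\tfrac{x^2}{2}$ and $\ln(1+y)\le y$, unbiasedness and the second-moment bound $\E_t[\ell_{ik',t}^2]\le 1/q_{ik',t}$, and then the FFD guarantee $m_{ij,t}\le \tfrac{11}{9}\mu_i+\tfrac{2}{3}\le 2\mu_i$ to get $q_{ik',t}\ge 1/(2\mu_i)$ and hence $\sum_{k'}p_{ik',t}/q_{ik',t}\le 2\mu_i$. Your added remarks on measurability and on why \textbf{(a2)} makes the clustering well-posed are fine refinements but do not change the argument.
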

\begin{proof}
Recall that $Z_{i,t} = \sum_{k=1}^{K}{z_{ik,t}}$. Therefore, we can write
\begin{align}
    \frac{Z_{i,t+1}}{Z_{i,t}} &= \sum_{k=1}^{K}{\frac{z_{ik,t+1}}{Z_{i,t}}} = \sum_{k=1}^{K}{\frac{z_{ik,t}}{Z_{i,t}}\exp\left( -\eta_i \ell_{ik,t}\right)}. \label{eq:1ap}
\end{align}
According to \eqref{eq:13r2}, $\frac{z_{ik,t}}{Z_{i,t}} = p_{ik,t}$ and as a result \eqref{eq:1ap} can be rewritten as
\begin{align}
    \frac{Z_{i,t+1}}{Z_{i,t}} = \sum_{k=1}^{K}{p_{ik,t}\exp\left(-\eta_i \ell_{ik,t} \right)}. \label{eq:6ap}
\end{align}
Combining the inequality $e^{-x} \le 1-x+\frac{1}{2}x^2, \forall x \ge 0$ with  \eqref{eq:1ap} we can conclude that
\begin{align}
    \frac{Z_{i,t+1}}{Z_{i,t}} \le \sum_{k=1}^{K}{p_{ik,t}\left(1-\eta_i\ell_{ik,t}+\frac{1}{2}(\eta_i \ell_{ik,t})^2\right)}. \label{eq:2ap}
\end{align}
Employing the inequality $1+x\le e^x$ and taking logarithm from both sides of \eqref{eq:2ap}, we arrive at
\begin{align}
    \ln \frac{Z_{i,t+1}}{Z_{i,t}} \le \sum_{k=1}^{K}{p_{ik,t}\left(-\eta_i\ell_{ik,t}+\frac{1}{2}(\eta_i \ell_{ik,t})^2\right)}. \label{eq:3ap}
\end{align}
Summing \eqref{eq:3ap} over learning rounds leads to
\begin{align}
    \ln \frac{Z_{i,T+1}}{Z_{i,1}} \le \sum_{t=1}^{T}{\sum_{k=1}^{K}{p_{ik,t}\left(-\eta_i\ell_{ik,t}+\frac{1}{2}(\eta_i \ell_{ik,t})^2\right)}}. \label{eq:4ap}
\end{align}
In addition, $\ln \frac{Z_{i,T+1}}{Z_{i,1}}$ can be bounded from below as
\begin{align}
    \ln \frac{Z_{i,T+1}}{Z_{i,1}} \ge \ln \frac{z_{ik,T+1}}{Z_{i,1}} = -\eta_i \sum_{t=1}^{T}{\ell_{ik,t}} - \ln K, \label{eq:5ap}
\end{align}
which holds for any $k \in [K]$. Combining \eqref{eq:4ap} with \eqref{eq:5ap}, we get
\begin{align}
    & \sum_{t=1}^{T}{\sum_{k=1}^{K}{p_{ik,t} \ell_{ik,t}}} - \sum_{t=1}^{T}{\ell_{ik,t}} \le \frac{\ln K}{\eta_i} + \frac{\eta_i}{2}\sum_{t=1}^{T}{\sum_{k=1}^{K}{p_{ik,t} \ell_{ik,t}^2}}. \label{eq:7ap}
\end{align}
Considering \eqref{eq:2} and \eqref{eq:3}, given the observed losses in prior rounds the expected value of $\ell_{ik,t}$ can be obtained as
\begin{align}
    \E_t[\ell_{ik,t}] &= \frac{\gL(f_k(\vx_{i,t};\vtheta_{k,t}),y_{i,t})}{q_{ik,t}}p_{ik,t} + \sum_{\forall j: j \neq k}{\frac{\gL(f_k(\vx_{i,t};\vtheta_{k,t}),y_{i,t})}{q_{ik,t}}\frac{p_{ij,t}}{m_{ij,t}}} \nonumber \\ &= \frac{\gL(f_k(\vx_{i,t};\vtheta_{k,t}),y_{i,t})}{q_{ik,t}}\left(p_{ik,t} + \sum_{\forall j: j \neq k}{\frac{p_{ij,t}}{m_{ij,t}}}\right) = \gL(f_k(\vx_{i,t};\vtheta_{k,t}),y_{i,t}). \label{eq:9apa}
\end{align}
Moreover, based on the assumption that $0 \le \gL(f_k(\vx_{i,t};\vtheta_{k,t}),y_{i,t}) \le 1$, given the observed losses in prior rounds, the expected value of $\ell_{ik,t}^2$ can be bounded from above as
\begin{align}
    \E_t[\ell_{ik,t}^2] &= \frac{\gL^2(f_k(\vx_{i,t};\vtheta_{k,t}),y_{i,t})}{q_{ik,t}^2}\left(p_{ik,t} + \sum_{\forall j: j \neq k}{\frac{p_{ij,t}}{m_{ij,t}}}\right) \nonumber \\ &= \frac{\gL^2(f_k(\vx_{i,t};\vtheta_{k,t}),y_{i,t})}{q_{ik,t}} \le \frac{1}{q_{ik,t}}. \label{eq:40ap}
\end{align}
Taking the expectation from both sides of \eqref{eq:7ap}, we obtain
\begin{align}
    \sum_{t=1}^{T}{\sum_{k=1}^{K}{p_{ik,t}\gL(f_k(\vx_{i,t};\vtheta_{k,t}),y_{i,t})}} - \sum_{t=1}^{T}{\gL(f_k(\vx_{i,t};\vtheta_{k,t}),y_{i,t})} \le \frac{\ln K}{\eta_i} + \frac{\eta_i}{2}\sum_{t=1}^{T}{\sum_{k=1}^{K}{\frac{p_{ik,t} }{q_{ik,t}}}}. \label{eq:10ap}
\end{align}
In addition, it can be written that
\begin{align}
    &\sum_{t=1}^{T}{\E_t[\gL(f_{I_{i,t}}(\vx_{i,t};\vtheta_{I_{i,t},t}),y_{i,t})]} = \sum_{t=1}^{T}{\sum_{k=1}^{K}{p_{ik,t}\gL(f_k(\vx_{i,t};\vtheta_{k,t}),y_{i,t})}}. \label{eq:15ap}
\end{align}
Therefore, from \eqref{eq:10ap} we arrive at
\begin{align}
    \sum_{t=1}^{T}{\E_t[\gL(f_{I_{i,t}}(\vx_{i,t};\vtheta_{I_{i,t},t}),y_{i,t})]} - \sum_{t=1}^{T}{\gL(f_k(\vx_{i,t};\vtheta_{k,t}),y_{i,t})} \le \frac{\ln K}{\eta_i} + \frac{\eta_i}{2}\sum_{t=1}^{T}{\sum_{k=1}^{K}{\frac{p_{ik,t}}{q_{ik,t}}}}. \label{eq:11ap}
\end{align}
According to \Algref{alg:6}, at each learning round, client $i$ splits all models except for the chosen model into clusters. Let $\nu_{ij}$ be the minimum number of clusters when client $i$ chooses $I_{i,t}=j$ and splits all models except for model $j$ into clusters. If client $i$ employs FFD algorithm to split models, the number of clusters $m_{ij,t}$ when client $i$ chooses $I_{i,t}=j$ satisfies $m_{ij,t} \le \frac{11}{9}\nu_{ij} + \frac{2}{3}$ \citep{Dosa2007}. Let $\mu_i$ be defined as $\mu_i = \max_{j}\nu_{ij}$. Therefore, it can be concluded that $m_{ij,t} \le \frac{11}{9}\mu_i + \frac{2}{3} \le 2\mu_i$. Thus, it can be written that
\begin{align}
    q_{ik,t} \ge p_{ik,t} + \frac{1-p_{ik,t}}{2\mu_i } \ge \frac{1}{2\mu_i} \label{eq:13ap}
\end{align}
Combining \eqref{eq:11ap} with \eqref{eq:13ap} yields
\begin{align}
    \sum_{t=1}^{T}{\E_t[\gL(f_{I_{i,t}}(\vx_{i,t};\vtheta_{I_{i,t},t}),y_{i,t})]} - \sum_{t=1}^{T}{\gL(f_k(\vx_{i,t};\vtheta_{k,t}),y_{i,t})} \le \frac{\ln K}{\eta_i} + \eta_i \mu_iT. \label{eq:14ap}
\end{align}
which proves the lemma.
\end{proof}

In what follows the server regret upper bound in fine-tuning model $k$ is obtained. Let $\hat{\ell}_{ik,t}$ denote the fine-tuning importance sampling loss estimate at learning round $t$ associated with the $i$-th client and the $k$-th model, defined as
\begin{align}
    \hat{\ell}_{ik,t} = \frac{\alpha}{q_{ik,t}} \gL(f_k(\vx_{i,t};\vtheta_{k,t}),y_{i,t})\gI(i \in \sG_t, k \in \sS_{i,t}). \label{eq:26ap}
\end{align}
According to \eqref{eq:3r2}, for any fixed $\vtheta$ and $k \in [K]$, it can be written that
\begin{align}
    \|\vtheta_{k,t+1} - \vtheta\|^2 &= \left\|\vtheta_{k,t} - \vtheta - \frac{\eta_f}{N}\sum_{i=1}^{N}{\nabla \hat{\ell}_{ik,t}}\right\|^2 \nonumber \\ &= \|\vtheta_{k,t} - \vtheta\|^2 - \frac{2\eta_f}{N}\sum_{i=1}^{N}{\nabla^\top \hat{\ell}_{ik,t}(\vtheta_{k,t}-\vtheta)} + \left\|\frac{\eta_f}{N}\sum_{i=1}^{N}{\nabla \hat{\ell}_{ik,t}}\right\|^2. \label{eq:27ap}
\end{align}
Moreover, due to the convexity of the loss function $\gL(\cdot,\cdot)$, for any learning round $t$, we find that
\begin{align}
    \nabla^\top \gL(f_k(\vx_{i,t};\vtheta_{k,t}),y_{i,t})(\vtheta - \vtheta_{k,t}) \le \gL(f_k(\vx_{i,t};\vtheta),y_{i,t}) - \gL(f_k(\vx_{i,t};\vtheta_{k,t}),y_{i,t}) \label{eq:28ap}
\end{align}
Multiplying both sides of \eqref{eq:28ap} by $\frac{\alpha \gI(i \in \sG_t, k \in \sS_{i,t})}{q_{ik,t}}$, we get
\begin{align}
    \nabla^\top \hat{\ell}_{ik,t}(\vtheta - \vtheta_{k,t}) \le \frac{\alpha}{q_{ik,t}}\gL(f_k(\vx_{i,t};\vtheta),y_{i,t})\gI(i \in \sG_t, k \in \sS_{i,t}) - \hat{\ell}_{ik,t}. \label{eq:29ap}
\end{align}
Summing \eqref{eq:29ap} over clients, we obtain
\begin{align}
    \sum_{i=1}^{N}{\hat{\ell}_{ik,t}} - \sum_{i=1}^{N}{\frac{\alpha}{q_{ik,t}}\gL(f_k(\vx_{i,t};\vtheta),y_{i,t})\gI(i \in \sG_t, k \in \sS_{i,t})} \le \sum_{i=1}^{N}{\nabla^\top \hat{\ell}_{ik,t}(\vtheta_{k,t} - \vtheta)}. \label{eq:30ap}
\end{align}
Combining \eqref{eq:27ap} with \eqref{eq:30ap} leads to
\begin{align}
    &\sum_{i=1}^{N}{\hat{\ell}_{ik,t}} - \sum_{i=1}^{N}{\frac{\alpha}{q_{ik,t}}\gL(f_k(\vx_{i,t};\vtheta),y_{i,t})\gI(i \in \sG_t, k \in \sS_{i,t})} \nonumber \\ \le & \frac{N}{2\eta_f}(\|\vtheta_{k,t} - \vtheta\|^2 - \|\vtheta_{k,t+1} - \vtheta\|^2) + \frac{\eta_f}{2N}\left\|\sum_{i=1}^{N}{\nabla \hat{\ell}_{ik,t}}\right\|^2. \label{eq:31ap}
\end{align}
Moreover, the expected value of $\hat{\ell}_{ik,t}$ and $\|\nabla \hat{\ell}_{ik,t}\|^2$ with respect to $\gI(i \in \sG_t, k \in \sS_{i,t})$ can be obtained as
\begin{subequations} \label{eq:32ap}
    \begin{align}
        \E_t[\hat{\ell}_{ik,t}] &= \frac{\alpha}{q_{ik,t}} \gL(f_k(\vx_{i,t};\vtheta_{k,t}),y_{i,t}) \times \left( \frac{p_{ik,t}}{\alpha} + \sum_{\forall j: j \neq k}\frac{p_{ij,t}}{\alpha m_{ij,t}} \right) \nonumber \\ &= \gL(f_k(\vx_{i,t};\vtheta_{k,t}),y_{i,t}) \label{eq:32apa} \\
        \E_t[\|\nabla \hat{\ell}_{ik,t}\|^2] &= \frac{\alpha^2}{q_{ik,t}^2}\left\|\nabla \gL(f_k(\vx_{i,t};\vtheta_{k,t}),y_{i,t})\right\|^2 \times \left( \frac{p_{ik,t}}{\alpha} + \sum_{\forall j: j \neq k}\frac{p_{ij,t}}{\alpha m_{ij,t}} \right) \nonumber \\ &= \frac{\alpha}{q_{ik,t}}\|\nabla \gL(f_k(\vx_{i,t};\vtheta_{k,t}),y_{i,t})\|^2 \le \frac{\alpha G^2}{q_{ik,t}}  \label{eq:32apb}
    \end{align}
\end{subequations}
where the last inequality in \eqref{eq:32apb} can be concluded from the assumption (a3) where $\|\nabla\gL(f(\vx_{i,t};\vtheta_{k,t}),y_{i,t})\| \le G$. In addition, using arithmetic mean geometric mean (AM-GM) inequality we find
\begin{align}
    \left\|\sum_{i=1}^{N}{\nabla \hat{\ell}_{ik,t}}\right\|^2 \le N\sum_{i=1}^{N}{\|\nabla \hat{\ell}_{ik,t}\|^2}. \label{eq:33ap}
\end{align}
Therefore, using \eqref{eq:32ap} and \eqref{eq:33ap}, taking the expectation from both sides of \eqref{eq:31ap}, it can be written that
\begin{align}
    & \sum_{i=1}^{N}{\gL(f_k(\vx_{i,t};\vtheta_{k,t}),y_{i,t})} - \sum_{i=1}^{N}{\gL(f_k(\vx_{i,t};\vtheta),y_{i,t})} \nonumber \\ \le &\frac{N}{2\eta_f}(\|\vtheta_{k,t} - \vtheta\|^2 - \|\vtheta_{k,t+1} - \vtheta\|^2) + \frac{\eta_f G^2}{2}\sum_{i=1}^{N}{\frac{\alpha}{q_{ik,t}}}. \label{eq:34ap}
\end{align}
Summing \eqref{eq:34ap} over learning rounds yields
\begin{align}
    & \sum_{i=1}^{N}{\sum_{t = 1}^{T}{\gL(f_k(\vx_{i,t};\vtheta_{k,t}),y_{i,t})}} - \sum_{i=1}^{N}{\sum_{t = 1}^{T}{\gL(f_k(\vx_{i,t};\vtheta),y_{i,t})}} \nonumber \\ \le & \frac{N}{2\eta_f}(\|\vtheta_{k,1} - \vtheta\|^2 - \|\vtheta_{k,T+1} - \vtheta\|^2) + \frac{\eta_f G^2}{2}\sum_{t=1}^{T}{\sum_{i=1}^{N}{\frac{\alpha}{q_{ik,t}}}}. \label{eq:35ap}
\end{align}
Plugging in $\vtheta = \vtheta_k^*$ in \eqref{eq:35ap} and considering the facts that $\vtheta_{k,1}=\mathbf{0}$ and $\|\vtheta_{k,T+1} - \vtheta\|^2 \ge 0$, we arrive at
\begin{align}
    & \sum_{i=1}^{N}{\sum_{t = 1}^{T}{\gL(f_k(\vx_{i,t};\vtheta_{k,t}),y_{i,t})}} - \sum_{i=1}^{N}{\sum_{t = 1}^{T}{\gL(f_k(\vx_{i,t};\vtheta_k^*),y_{i,t})}} \nonumber \\ \le & \frac{N}{2\eta_f} \|\vtheta_k^*\|^2 + \frac{\eta_f G^2}{2}\sum_{t=1}^{T}{\sum_{i=1}^{N}{\frac{\alpha}{q_{ik,t}}}} \label{eq:36ap}
\end{align}
According to \eqref{eq:13ap}, it can be concluded that $\frac{1}{q_{ik,t}} \le 2\mu_i$. Therefore, considering assumption (a4), we get
\begin{align}
    \sum_{i=1}^{N}{\sum_{t = 1}^{T}{\gL(f_k(\vx_{i,t};\vtheta_{k,t}),y_{i,t})}} - \sum_{i=1}^{N}{\sum_{t = 1}^{T}{\gL(f_k(\vx_{i,t};\vtheta_k^*),y_{i,t})}} \le \frac{NR}{2\eta_f} + \sum_{i=1}^{N}{ \mu_i \alpha \eta_f G^2 T} \label{eq:48ap}
\end{align}
which proves \eqref{eq:12r2} and completes the proof of Theorem \ref{th:4}.

\section{Supplementary Experimental Results and Details} \label{E}

The performance of both the proposed OFMS-FT method and other baseline approaches is evaluated through online image classification and online regression tasks. The image classification experiments involve the utilization of the CIFAR-10 and MNIST datasets. CIFAR-10 and MNIST are well-known computer vision datasets, comprising a total of $60,000$ and $70,000$ color images, respectively, distributed across $10$ distinct classes. Each dataset includes $10,000$ test samples, with the remaining samples designated for training. To facilitate model selection, as outlined in Section \ref{experiments}, we train a set of $20$ models using the training data from CIFAR-10 and MNIST. These models encompass two distinct architectural designs, resulting in ten models trained under each architecture. For each class label within these datasets, two models with differing architectures are trained. These models exhibit a bias towards the specific class label they are trained on, utilizing a portion of the training dataset that contains a greater number of samples from that class compared to the other classes. For the CIFAR-10 dataset, ten CNNs are trained using the VGG architecture \citep{Simonyan2015} with $2$ blocks, while the remaining ten are trained using VGG architecture with $3$ blocks. The training data for each model is non-i.i.d. sampled from the $50,000$ training samples. Precisely, each CNN is trained on $9,500$ training data samples, consisting of $5,000$ samples from one class and $500$ samples drawn from the training set of each of the other nine classes. Similarly, for the MNIST dataset, ten CNNs are trained using VGG with one block, and the other ten are trained using VGG with $2$ blocks. To train each model, $6,900$ data samples are drawn from the training set, with $6,000$ samples belonging to one class and $100$ samples from each of the other nine classes. Additionally, the testing data samples for CIFAR-10 and MNIST are distributed among clients in a non-i.i.d. manner. For CIFAR-10, each client receives $155$ testing data samples from one class and $5$ samples from each of the other nine classes. In the case of MNIST, each client receives at least $133$ samples from one class and at least $5$ samples from the other classes. The $200$ testing data samples are randomly shuffled and are sequentially presented to each client over $T=200$ learning rounds. Moreover, for online regression task, the performance of algorithms are tested on the following datasets \citep{Kelly2023}:
\begin{itemize}
\item \textbf{Air:} Each data sample has $14$ features including information related to air quality such as concentration of some chemicals in the air. Data samples are collected from different geographical sites. The goal is to predict the concentration of CO in the air \citep{Zhang2017}.
    \item \textbf{WEC:} Each data sample has $48$ features of wave energy converters. Data samples are collected from $4$ different geographical sites. The goal is to predict total power output \citep{Neshat2018}.
\end{itemize}
For each regression dataset, $20$ fully-connected feedforward neural networks are trained. All neural networks have $5$ hidden layers each with $100$ hidden neurons. ReLU activation function is employed for all hidden neurons in all networks. In order to train models for Air dataset, $10$ neural networks are trained on $30,000$ samples from the site Dongsi with different initialization while other $10$ neural networks are trained on $30,000$ samples of Dingling site with different initialization. In the experiments, data samples of Air dataset are distributed non-i.i.d among clients such that $50$ clients observe data samples from Aotizhongxin site while other $50$ clients observe data samples from Changping site. To train models for WEC dataset, $10$ neural networks are trained on $70,000$ samples from the site in Sydney with different initialization. The remaining $10$ neural networks are trained on $70,000$ samples from the site in Tasmania with different initialization. Data samples of WEC dataset are distributed non-i.i.d among clients such that $50$ clients observe data samples from Adelaide site while other $50$ clients observe data samples from Perth site. In the experiments, using Fed-OMD and PerFedAvg, each client performs one epoch of stochastic gradient descent (SGD) with learning rate of $0.001$ on its batch of data to fine-tune the model. In order to perform fine-tuning, clients start to update models after $50$ learning rounds so that clients can store $50$ samples in batch.  All experiments were carried out using Intel(R) Core(TM) i7-10510U CPU @ 1.80 GHz 2.30 GHz processor with a 64-bit {Windows} operating system.

\begin{table}[t]
\caption{Average and standard deviation of clients' accuracy using OFMS-FT over CIFAR-10 and MNIST with the change in budget.}
\label{table:3}
\begin{center}
\begin{tabular}{l|c|c}
\toprule
{Budget}    &CIFAR-10 &MNIST
\\ \midrule
$B_i = 2, \forall i \in [N]$ &$70.01\% \pm 6.96\%$    &$89.87\% \pm 3.33\%$ \\
$B_i = 5, \forall i \in [N]$  &$76.77\% \pm 4.46\%$    &$92.05\% \pm 2.69\%$ \\
$B_i = 10, \forall i \in [N]$  &$79.90\% \pm 4.23\%$   &$92.93\% \pm 2.58\%$ \\
\bottomrule
\end{tabular}
\end{center}
\end{table}

\begin{table}[t]
\caption{Average and standard deviation of clients' MSE ($\times 10^{-3}$) using OFMS-FT over Air and WEC with the change in budget.}
\label{table:4}
\begin{center}
\begin{tabular}{l|c|c}
\toprule
{Budget}    &Air &WEC
\\ \midrule
$B_i = 2, \forall i \in [N]$ &$7.51 \pm 4.82$    &$8.27 \pm 1.56$ \\
$B_i = 5, \forall i \in [N]$  &$7.46 \pm 5.10$    &$7.09 \pm 1.67$ \\
$B_i = 10, \forall i \in [N]$  &$7.38 \pm 4.86$   &$6.99 \pm 1.63$ \\
\bottomrule
\end{tabular}
\end{center}
\end{table}

Table \ref{table:3} shows the average accuracy of clients along with its standard deviation on CIFAR-10 and MNIST datasets with the change in the memory budget when clients employ OFMS-FT. Moreover, Table \ref{table:4} demonstrates the average MSE and its standard deviation across clients for different memory budgets on Air and WEC datasets when clients use OFMS-FT. Results in Tables \ref{table:3} and \ref{table:4} confirm that if clients have larger memory, the accuracy of OFMS-FT improves.

\begin{table}[t]
\setlength{\tabcolsep}{4pt}
\caption{Average run time (s) of clients on CIFAR-10, MNIST, Air and WEC datasets.}
\label{table:5}
\begin{center}
\begin{tabular}{l||c|c|c|c}
\toprule
{\bf Algorithms}    &CIFAR-10 &MNIST &Air &WEC
\\ \midrule
MAB &$9.43$    &$9.34$ &$8.89$    &$9.51$ \\
Non-Fed-OMS  &$57.38$    &$62.09$ &$57.56$    &$52.70$ \\
Fed-OMD  &$32.80$   &$23.24$ &$15.86$   &$15.64$ \\
PerFedAvg   &$47.61$   &$32.21$ &$19.29$   &$22.03$ \\
\hline
OFMS-FT, $B_i = 5$, $\forall i$  &$145.91$   &$99.49$ &$55.59$   &$55.69$ \\
OFMS-FT, $B_i = 2$, $\forall i$  &$64.42$   &$43.06$ &$27.78$   &$28.82$ \\
\bottomrule
\end{tabular}
\end{center}
\end{table}

We report the run times of algorithms in Table \ref{table:5}. Run time refers to average total run time of clients to perform their prediction task on the entire data samples that they observe up until time horizon $T$. In Table \ref{table:5}, OFMS-FT, $B_i = 5$, and OFMS-FT, $B_i = 2$ refer to the proposed algorithm with budgets $B_i = 5$ and $B_i=2$, respectively. Table \ref{table:5} shows that other algorithms run faster than OFMS-FT while OFMS-FT outperforms others in terms of accuracy (see Table \ref{table:2}). OFMS-FT runs slower since OFMS-FT evaluates and fine-tunes multiple models at each round. Comparing the run times of OFMS-FT, $B_i=5$ with OFMS-FT, $B_i=2$  shows that the time complexity of OFMS-FT can be controlled by budget. In time-sensitive scenarios, the budget can be chosen such that OFMS-FT can fulfill required computations before the start of the next round.

\section{Supplementary Discussions and Analysis} \label{F}
This section presents extended discussions on performance analysis of OFMS-FT.

\subsection{Supplementary Analysis}
In sections \ref{sec:EFL-FG} and \ref{sec:regret}, it is assumed that at each learning round $t$, each client observes one data sample and communicate with the server every learning round. This subsection analyzes the regret of OFMS-FT when clients communicate with the server every $n \ge 1$ learning rounds. Every round that clients communicate with the server called \emph{communication round}. Therefore, the number of communication rounds $U$ is $\lfloor \frac{T}{n} \rfloor$. Let the communication $u$ occurs at learning round $\tau_u$. Without loss of generality, we can assume that $\tau_u = n(u-1) + 1$. In this case, at communication round $u$, client $i$ draws the model index $I_{i,u}$ using the PMF specified in \eqref{eq:13r2}. Then client $i$ splits all models except for model $I_{i,u}$ into clusters $\sD_{i1,u},\ldots,\sD_{im_{i,u},u}$ such that the cumulative cost of models in each cluster does not exceed $B_i-c_{I_{i,u}}$. Then client $i$ draws one of the clusters uniformly at random. Let $J_{i,u}$ denote the index of the selected cluster. Client $i$ downloads all models in $J_{i,u}$-th cluster in addition to model $I_{i,u}$. Upon receiving models, client $i$ computes the importance loss estimate as
\begin{align}
    \ell_{ik,u} = \sum_{t=\tau_u}^{\tau_{u+1}-1}{\frac{\gL(f_k(\vx_{i,t};\vtheta_{k,u}),y_{i,t})}{q_{ik,u}}\gI(k \in \sS_{i,u})} \label{eq:50ap}
\end{align}
where $\vtheta_{k,u}$ denote the parameter of model $k$ between communications rounds $u$ and $u+1$ and $\sS_{i,u}$ is a subset of models stored by client $i$ between communications rounds $u$ and $u+1$. Also, $q_{ik,u}$ can be obtained as
\begin{align}
    q_{ik,u} = p_{ik,\tau_u} + \sum_{\forall j:j \neq k}{\frac{p_{ij,\tau_u}}{m_{ij,u}}} \label{eq:51ap}
\end{align}
where $m_{ij,u}$ denote the number of model clusters at communication round $u$ if $I_{i,u}=j$. Moreover, importance sampling gradient estimate is calculated as follows by client $i$
\begin{align}
    \nabla \hat{\ell}_{ik,u} = \sum_{t=\tau_u}^{\tau_{u+1}-1}{\frac{\alpha}{q_{ik,u}} \nabla \gL(f_k(\vx_{i,t};\vtheta_{k,u}),y_{i,t}) \gI(i \in \sD_u, k \in \sS_{i,u})} \label{eq:52ap}
\end{align}
where $\sD_u$ represents a subset of clients chosen by the server at communication round $u$ to fine-tune models.
The rest of procedures and definitions are the same as \Algref{alg:6} and Section \ref{sec:EFL-FG}. Moreover, when clients communicate with the server every $n$ learning rounds, the $i$-th client regret $\gR_{i,T}$ and the server regret $\gS_{k,T}$ associated with model $k$ are defined as
\begin{subequations}
    \begin{align}
        \gR_{i,T} &= \sum_{u=1}^{U}{\E_u\left[\sum_{t=\tau_u}^{\tau_{u+1}-1}{\gL(f_{I_{i,u}}(\vx_{i,t};\vtheta_{k,u}),y_{i,t})}\right]} - \min_{k \in [K]}{\sum_{u=1}^{U}}{\sum_{t=\tau_u}^{\tau_{u+1}-1}{\gL(f_k(\vx_{i,t};\vtheta_{k,u}),y_{i,t})}} \\
        \gS_{k,T} &= \frac{1}{N}\sum_{i=1}^{N}{\sum_{u=1}^{U}{\sum_{t=\tau_u}^{\tau_{u+1}-1}{\gL(f_k(\vx_{i,t};\vtheta_{k,u}),y_{i,t})}}} - \frac{1}{N}\sum_{i=1}^{N}{{\sum_{u=1}^{U}}{\sum_{t=\tau_u}^{\tau_{u+1}-1}{\gL(f_k(\vx_{i,t};\vtheta_k^*),y_{i,t})}}}
    \end{align}
\end{subequations}
where $\E_u[\cdot]$ denote the expected value given observed losses up until communication round $u$. The following Theorem obtains the regret upper bound for OFMS-FT when clients communicate with the server every $n$ learning rounds.

\begin{theorem} \label{th:5}
 Assume that client $i$, $\forall i \in [N]$ communicates with the server every $n$ learning rounds. Under (a1) and (a2), the expected cumulative regret of the $i$-th client using OFMS-FT is bounded by
\begin{align}
    \gR_{i,T} \le \frac{\ln K}{\eta_i} + \eta_i\mu_inT. \label{eq:54ap}
\end{align}
which holds for all $i \in [N]$. Under (a1)--(a4), the cumulative regret of the server in fine-tuning model $k$ using OFMS-FT is bounded by
\begin{align}
    \gS_{k,T} \le \frac{R}{2\eta_f} + \frac{1}{N}\sum_{i=1}^{N}{ \eta_f \alpha \mu_i G^2 n T} \label{eq:55ap}
\end{align}

\end{theorem}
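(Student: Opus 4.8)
The plan is to reprove Theorem~\ref{th:4} essentially verbatim, but with the communication round $u=1,\ldots,U$ (where $U=\lfloor T/n\rfloor$) playing the role of the learning round $t$, and with the per-round estimates replaced by the block estimates \eqref{eq:50ap} and \eqref{eq:52ap}. The only genuinely new ingredient is bookkeeping: each block sum of losses (resp.\ gradients) is bounded by $n$ (resp.\ $nG$) rather than by $1$ (resp.\ $G$), so every second-moment term acquires a factor $n^2$, while the number of terms in the potential and telescoping sums drops from $T$ to $U\le T/n$; the identity $nU\le T$ then converts $n^2U$ into $nT$, which is exactly the single extra factor of $n$ in \eqref{eq:54ap} and \eqref{eq:55ap}.

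\textbf{Client regret.} First I would run the exponential-weights potential argument of Lemma~\ref{lem:1} over communication rounds, with $z_{ik,u+1}=z_{ik,u}\exp(-\eta_i\ell_{ik,u})$. Since $\ell_{ik,u}\ge 0$, the inequalities $e^{-x}\le 1-x+\frac12 x^2$ and $1+x\le e^x$ give, exactly as in \eqref{eq:7ap}, the bound $\sum_u\sum_k p_{ik,\tau_u}\ell_{ik,u}-\sum_u\ell_{ik,u}\le \frac{\ln K}{\eta_i}+\frac{\eta_i}{2}\sum_u\sum_k p_{ik,\tau_u}\ell_{ik,u}^2$ for every fixed comparator $k$. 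The two facts needed are conditional unbiasedness, $\E_u[\ell_{ik,u}]=\sum_{t=\tau_u}^{\tau_{u+1}-1}\gL(f_k(\vx_{i,t};\vtheta_{k,u}),y_{i,t})$ (from \eqref{eq:51ap}, exactly as \eqref{eq:9apa}), and the second-moment bound $\E_u[\ell_{ik,u}^2]\le n^2/q_{ik,u}$, which holds because $\gL\in[0,1]$ so the block sum lies in $[0,n]$ and $\gI(k\in\sS_{i,u})^2=\gI(k\in\sS_{i,u})$. Taking expectations, using $\sum_k p_{ik,\tau_u}=1$ and $q_{ik,u}\ge \frac{1}{2\mu_i}$ (the FFD clustering guarantee of \eqref{eq:13ap} is applied unchanged at each communication round), the factor $2$ from $q_{ik,u}^{-1}\le 2\mu_i$ cancels the $\frac12$ in front of the variance term, yielding $\gR_{i,T}\le \frac{\ln K}{\eta_i}+\eta_i\mu_i n^2 U\le \frac{\ln K}{\eta_i}+\eta_i\mu_i nT$, which is \eqref{eq:54ap}.

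\textbf{Server regret.} Next I would repeat the online-gradient-descent analysis \eqref{eq:27ap}--\eqref{eq:36ap} at the block level. Because $\vtheta_{k,u}$ is held fixed throughout $[\tau_u,\tau_{u+1})$, summing the convexity inequality \eqref{eq:28ap} over $t$ in the block and multiplying by $\frac{\alpha}{q_{ik,u}}\gI(i\in\sD_u,k\in\sS_{i,u})$ gives the block analogue of \eqref{eq:29ap}. Combining this with the expansion of $\|\vtheta_{k,u+1}-\vtheta\|^2$ from \eqref{eq:3r2}, the AM--GM step \eqref{eq:33ap}, and the block versions of the moment identities \eqref{eq:32ap} --- in particular $\E_u[\|\nabla\hat\ell_{ik,u}\|^2]\le \frac{\alpha n^2 G^2}{q_{ik,u}}$, obtained from $\|\sum_{t=\tau_u}^{\tau_{u+1}-1}\nabla\gL\|\le nG$ and the same probability bookkeeping that produced \eqref{eq:32apb} --- and taking expectations gives the block analogue of \eqref{eq:34ap}. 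Summing over $u=1,\ldots,U$, setting $\vtheta=\vtheta_k^*$, and using $\vtheta_{k,1}=\mathbf{0}$, $\|\vtheta_k^*\|^2\le R$, and $q_{ik,u}^{-1}\le 2\mu_i$ yields $\gS_{k,T}\le \frac{R}{2\eta_f}+\frac{\eta_f\alpha n^2 G^2}{N}U\sum_{i=1}^N\mu_i$, and $nU\le T$ again turns $n^2U$ into $nT$, recovering \eqref{eq:55ap}.

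\textbf{Main obstacle.} There is no new conceptual difficulty; the proof is a routine re-derivation of Theorem~\ref{th:4}. The step that demands the most care is confirming that within a block the model parameter is genuinely frozen, so that $\E_u$ is an expectation over the algorithm's internal randomness only (the data being fixed in the oblivious-adversary sense) and both the algorithm's loss and the comparator's loss in the definitions of $\gR_{i,T}$ and $\gS_{k,T}$ are evaluated at the same $\vtheta_{k,u}$; given that, the only substantive change is propagating the $n$ (resp.\ $n^2$) factors and invoking $nU\le T$ at the end.
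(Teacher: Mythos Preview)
Your proposal is correct and follows essentially the same route as the paper's proof: rerun the exponential-weights and OGD analyses of Theorem~\ref{th:4} at the communication-round level, use the FFD bound $q_{ik,u}\ge 1/(2\mu_i)$ unchanged, and convert $n^2U$ to $nT$. The only cosmetic difference is that the paper obtains the second-moment bounds $\E_u[\ell_{ik,u}^2]\le n^2/q_{ik,u}$ and $\E_u[\|\nabla\hat\ell_{ik,u}\|^2]\le \alpha n^2 G^2/q_{ik,u}$ via the AM--GM inequality on the block sum, whereas you bound the block sum directly by $n$ (resp.\ $nG$); both yield the same estimates.
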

\begin{proof}
see subSection \ref{F3}
\end{proof}
If client $i$ sets
\begin{align}
    \eta_i = \gO\left(\sqrt{\frac{\ln K}{\mu_i nT}}\right), \label{eq:72ap}
\end{align}
then the $i$-th client achieves sub-linear regret of
\begin{align}
    \gR_{i,T} \le \gO\left(\sqrt{(\ln K) \mu_i nT}\right), \label{eq:73ap}
\end{align}
while the server achieves sub-linear regret of
\begin{align}
    \gS_{k,T} \le \gO\left( \sqrt{\frac{\alpha nT}{N}\sum_{i=1}^{N}{\mu_i}} \right) \label{eq:74ap}
\end{align}
by setting
\begin{align}
    \eta_f = \gO\left(\frac{1}{\sqrt{\frac{\alpha nT}{N}\sum_{i=1}^{N}{\mu_i}}}\right). \label{eq:75ap}
\end{align}
As can be inferred from theorem \ref{th:5} and regret analysis presented in this subsection, the increase in $n$, degrades the regret upper bound of both clients and the server. Increase in $n$ causes that clients update their stored models fewer times and this reduces the flexibility of model selection for clients. Also, increase in $n$ leads to fine-tuning models less often which can adversely affect the prediction accuracy of models.

\subsection{Proof of Theorem \ref{th:5}} \label{F3}
Substituting $\ell_{ik,t}$ with $\ell_{ik,u}$ in \eqref{eq:1ap} and following the steps from \eqref{eq:1ap} to \eqref{eq:7ap}, we get
\begin{align}
    \sum_{u=1}^{U}{\sum_{k=1}^{K}{p_{ik,\tau_u}\ell_{ik,u}}} - \sum_{u=1}^{U}{\ell_{ik,u}} \le \frac{\ln K}{\eta_i} + \frac{\eta_i}{2}\sum_{u=1}^{U}{\sum_{k=1}^{K}{p_{ik,\tau_u} \ell_{ik,u}^2}}. \label{eq:56ap}
\end{align}
Moreover, the expected value of $\ell_{ik,u}$ and $\ell_{ik,u}^2$ given observed losses till communication round $u$, can be obtained as
\begin{align}
    \E_u[\ell_{ik,u}] &= \left(\sum_{t=\tau_u}^{\tau_{u+1}-1}{\frac{\gL(f_k(\vx_{i,t};\vtheta_{k,u}),y_{i,t})}{q_{ik,u}}}\right) \times \left(p_{ik,\tau_u} + \sum_{\forall j:j \neq k}{\frac{p_{ij,\tau_u}}{m_{ij,u}}}\right) \nonumber \\ &=  \sum_{t=\tau_u}^{\tau_{u+1}-1}{\gL(f_k(\vx_{i,t};\vtheta_{k,u}),y_{i,t})}. \label{eq:57apa}
\end{align}
Furthermore, using arithmetic-mean geometric-mean (AM-GM) inequality, $\ell_{ik,u}^2$ can bounded from above as
\begin{align}
    \ell_{ik,u}^2 \le n \left( \sum_{t=\tau_u}^{\tau_{u+1} - 1}\left(\frac{\gL(f_k(\vx_{i,t};\vtheta_{k,u}),y_{i,t})}{q_{ik,u}}\gI(k \in \sS_{i,u})\right)^2\right). \label{eq:58ap}
\end{align}
Moreover, based on the assumption that $0 \le \gL(f_k(\vx_{i,t};\vtheta_{k,u}),y_{i,t}) \le 1$, given the observed losses in prior rounds, expected value of $\ell_{ik,u}^2$ can be bounded from above as
\begin{align}
    \E_u\left[\left(\frac{\gL(f_k(\vx_{i,t};\vtheta_{k,u}),y_{i,t})}{q_{ik,u}}\gI(k \in \sS_{i,u})\right)^2\right] &= \frac{\gL^2(f_k(\vx_{i,t};\vtheta_{k,u}),y_{i,t})}{q_{ik,u}^2}\times \left(p_{ik,\tau_u} + \sum_{\forall j:j \neq k}{\frac{p_{ij,\tau_u}}{m_{ij,u}}}\right) \nonumber \\ &= \frac{\gL^2(f_k(\vx_{i,t};\vtheta_{k,u}),y_{i,t})}{q_{ik,u}} \le \frac{1}{q_{ik,u}}. \label{eq:59ap}
\end{align}
Combining \eqref{eq:58ap} with \eqref{eq:59ap}, we arrive at
\begin{align}
    \E_u[\ell_{ik,u}^2] \le \frac{n^2}{q_{ik,u}}. \label{eq:60ap}
\end{align}
Taking the expectation from both sides of \eqref{eq:56ap} and considering the fact that $\xi_i \ge 0$, it can be concluded that
\begin{align}
    & \sum_{u=1}^{U}{\sum_{t=\tau_u}^{\tau_{u+1}}{\sum_{k=1}^{K}{p_{ik,\tau_u}\gL(f_k(\vx_{i,t};\vtheta_{k,u}),y_{i,t})}}} - \sum_{u=1}^{U}{\sum_{t=\tau_u}^{\tau_{u+1}}{\gL(f_k(\vx_{i,t};\vtheta_{k,u}),y_{i,t})}} \nonumber \\ \le & \frac{\ln K}{\eta_i} +\frac{\eta_in^2}{2}\sum_{u=1}^{U}{\sum_{k=1}^{K}{\frac{p_{ik,\tau_u}}{q_{ik,u}}}}. \label{eq:61ap}
\end{align}
Moreover, it can be written that
\begin{align}
    \E_u\left[\sum_{t=\tau_u}^{\tau_{u+1}-1}{\gL(f_{I_{i,u}}(\vx_{i,t};\vtheta_{k,u}),y_{i,t})}\right] = \sum_{t=\tau_u}^{\tau_{u+1}}{\sum_{k=1}^{K}{p_{ik,\tau_u}\gL(f_k(\vx_{i,t};\vtheta_{k,u}),y_{i,t})}}. \label{eq:62ap}
\end{align}
Considering the facts that \eqref{eq:13ap} holds true for $p_{ik,\tau_u}$ and $q_{ik,u}$, $\forall k \in [K]$ and $nU=T$, we can conclude that
\begin{align}
    &\sum_{u=1}^{U}{\E_u\left[\sum_{t=\tau_u}^{\tau_{u+1}-1}{\gL(f_{I_{i,u}}(\vx_{i,t};\vtheta_{k,u}),y_{i,t})}\right]} - \sum_{u=1}^{U}{\sum_{t=\tau_u}^{\tau_{u+1}}{\gL(f_k(\vx_{i,t};\vtheta_{k,u}),y_{i,t})}} \nonumber \\ \le &\frac{\ln K}{\eta_i} + \eta_i\mu_inT \label{eq:63ap}
\end{align}
which obtains the regret of client $i$ using OFMS-FT when the $i$-th client communicates with the server every $n$ learning rounds. Similar to $\hat{\ell}_{ik,t}$, define $\hat{\ell}_{ik,u}$ as
\begin{align}
    \hat{\ell}_{ik,u} = \sum_{t=\tau_u}^{\tau_{u+1}-1}{\frac{\alpha}{q_{ik,u}} \gL(f_k(\vx_{i,t};\vtheta_{k,u}),y_{i,t}) \gI(i \in \sD_u, k \in \sS_{i,u})} \label{eq:66ap}
\end{align}
Moreover, substituting $\hat{\ell}_{ik,t}$ with $\hat{\ell}_{ik,u}$ in \eqref{eq:27ap} and following the derivation steps from \eqref{eq:27ap} to \eqref{eq:31ap}, we obtain
\begin{align}
    &\sum_{i=1}^{N}{\hat{\ell}_{ik,u}} - \sum_{i=1}^{N}{\sum_{t=\tau_u}^{\tau_{u+1}-1}{\frac{\alpha}{q_{ik,u}}\gL(f_k(\vx_{i,t};\vtheta),y_{i,t})\gI(i \in \sD_u, k \in \sS_{i,u})}} \nonumber \\ \le & \frac{N}{2\eta_f}(\|\vtheta_{k,u} - \vtheta\|^2 - \|\vtheta_{k,u+1} - \vtheta\|^2) + \frac{\eta_f}{2N}\left\|\sum_{i=1}^{N}{\nabla \hat{\ell}_{ik,u}}\right\|^2. \label{eq:64ap}
\end{align}
Expected value of $\hat{\ell}_{ik,u}$ and $\|\nabla \hat{\ell}_{ik,u}\|^2$ can be obtained as
\begin{subequations} \label{eq:65ap}
    \begin{align}
        \E_u[\hat{\ell}_{ik,u}] &= \sum_{t=\tau_u}^{\tau_{u+1}-1}{\frac{\alpha}{q_{ik,u}} \gL(f_k(\vx_{i,t};\vtheta_{k,u}),y_{i,t})} \times \left(\frac{p_{ik,\tau_u}}{\alpha} + \sum_{\forall j:j \neq k}{\frac{p_{ij,\tau_u}}{\alpha m_{ij,u}}}\right) \nonumber \\ &= \sum_{t=\tau_u}^{\tau_{u+1}-1}{ \gL(f_k(\vx_{i,t};\vtheta_{k,u}),y_{i,t})} \label{eq:65apa} \\
        \E_u[\|\nabla \hat{\ell}_{ik,u}\|^2] &= \frac{\alpha^2}{q_{ik,u}^2} \left\|\sum_{t=\tau_u}^{\tau_{u+1}-1}{\nabla \gL(f_k(\vx_{i,t};\vtheta_{k,u}),y_{i,t})}\right\|^2 \times \left(\frac{p_{ik,\tau_u}}{\alpha} + \sum_{\forall j:j \neq k}{\frac{p_{ij,\tau_u}}{\alpha m_{ij,u}}}\right) \nonumber \\ &= \frac{\alpha}{q_{ik,u}} \left\|\sum_{t=\tau_u}^{\tau_{u+1}-1}{\nabla \gL(f_k(\vx_{i,t};\vtheta_{k,u}),y_{i,t})}\right\|^2 \nonumber \\ &\le \frac{\alpha n}{q_{ik,u}} \sum_{t=\tau_u}^{\tau_{u+1}-1}{\|\nabla \gL(f_k(\vx_{i,t};\vtheta_{k,u}),y_{i,t})\|^2} \le \frac{\alpha n^2 G^2}{q_{ik,u}} \label{eq:65apb}
    \end{align}
\end{subequations}
where the last two inequalities in \eqref{eq:65apb} obtained using AM-GM inequality and the assumption that $\|\nabla \gL(f_k(\vx_{i,t};\vtheta_{k,u}),y_{i,t})\|^2 \le G^2$. Moreover, using AM-GM inequality and \eqref{eq:65apb}, we can write that
\begin{align}
    \E_u\left[\left\|\sum_{i=1}^{N}{\nabla \hat{\ell}_{ik,u}}\right\|^2\right] \le N \sum_{i=1}^{N}{\E_u[\|\nabla \hat{\ell}_{ik,u}\|^2]} \le N \sum_{i=1}^{N}{\frac{\alpha n^2 G^2}{q_{ik,u}}}. \label{eq:67ap}
\end{align}
Taking the expectation from both sides of \eqref{eq:64ap}, we get
\begin{align}
    &\sum_{i=1}^{N}{\sum_{t=\tau_u}^{\tau_{u+1}-1}{ \gL(f_k(\vx_{i,t};\vtheta_{k,u}),y_{i,t})}} - \sum_{i=1}^{N}{\sum_{t=\tau_u}^{\tau_{u+1}-1}{\gL(f_k(\vx_{i,t};\vtheta),y_{i,t})}} \nonumber \\ \le &\frac{N}{2\eta_f}(\|\vtheta_{k,u} - \vtheta\|^2 - \|\vtheta_{k,u+1} - \vtheta\|^2) + \frac{\eta_f}{2}\sum_{i=1}^{N}{\frac{\alpha n^2 G^2}{q_{ik,u}}}. \label{eq:68ap}
\end{align}
Following derivation steps from \eqref{eq:34ap} to \eqref{eq:36ap}, using \eqref{eq:68ap} we can obtain
\begin{align}
    & \sum_{i=1}^{N}{\sum_{u=1}^{U}{\sum_{t=\tau_u}^{\tau_{u+1}-1}{ \gL(f_k(\vx_{i,t};\vtheta_{k,u}),y_{i,t})}}} - \sum_{i=1}^{N}{\sum_{u=1}^{U}{\sum_{t=\tau_u}^{\tau_{u+1}-1}{\gL(f_k(\vx_{i,t};\vtheta_k^*),y_{i,t})}}} \nonumber \\ \le & \frac{N\|\vtheta_k^*\|^2}{2\eta_f} + \frac{\eta_f}{2}\sum_{i=1}^{N}{\sum_{u=1}^{U}{\frac{\alpha n^2 G^2}{q_{ik,u}}}}. \label{eq:69ap}
\end{align}
Considering the fact that $q_{ik,u} \ge \frac{1}{2\mu_i}$ (see \eqref{eq:13ap}), using \eqref{eq:69ap} we arrive at
\begin{align}
    &\sum_{i=1}^{N}{\sum_{u=1}^{U}{\sum_{t=\tau_u}^{\tau_{u+1}-1}{ \gL(f_k(\vx_{i,t};\vtheta_{k,u}),y_{i,t})}}} - \sum_{i=1}^{N}{\sum_{u=1}^{U}{\sum_{t=\tau_u}^{\tau_{u+1}-1}{\gL(f_k(\vx_{i,t};\vtheta_k^*),y_{i,t})}}} \nonumber \\ \le &\frac{N\|\vtheta_k^*\|^2}{2\eta_f} + \sum_{i=1}^{N}{ \eta_f \alpha \mu_i G^2 n T} \label{eq:71ap}
\end{align}
which proves the theorem.

\end{document}